 \newcommand{\ATLnote}[1]{}
\newcommand{\AWnote}[1]{}
\newtheorem{theorem}{Theorem}
\newtheorem{corollary}[theorem]{Corollary}
\newtheorem{lemma}[theorem]{Lemma}
\DeclareMathOperator*{\argmin}{arg\,min}
\title{Parameter Inference of Time Series by Delay Embeddings and Learning Differentiable Operators}
\author[1]{Alex Tong Lin}
\author[2]{Adrian S. Wong}
\author[3]{Robert Martin}
\author[4]{Stanley J. Osher}
\author[5]{Daniel Eckhardt}
\affil[1, 4]{University of California, Los Angeles}
\affil[2]{Jacobs Technology Inc. Edwards AFB}
\affil[3]{DEVCOM-ARL U.S.Army Research Office}
\affil[5]{Air Force Research Laboratory, Edwards AFB}
\date{}
\begin{document}
	
\maketitle

\begin{abstract}
    We provide a method to identify system parameters of dynamical systems, called ID-ODE -- Inference by Differentiation and Observing Delay Embeddings. In this setting, we are given a dataset of trajectories from a dynamical system with system parameter labels. Our goal is to identify system parameters of new trajectories. The given trajectories may or may not encompass the full state of the system, and we may only observe a one-dimensional time series. In the latter case, we reconstruct the full state by using delay embeddings, and under sufficient conditions, Taken's Embedding Theorem assures us the reconstruction is diffeomorphic to the original. This allows our method to work on time series. Our method works by first learning the velocity operator (as given or reconstructed) with a neural network having both state and system parameters as variable inputs. Then on new trajectories we backpropagate prediction errors to the system parameter inputs giving us a gradient. We then use gradient descent to infer the correct system parameter. We demonstrate the efficacy of our approach on many numerical examples: the Lorenz system, Lorenz96, Lotka-Volterra Predator-Prey, and the Compound Double Pendulum. We also apply our algorithm on a real-world dataset: propulsion of the Hall-effect Thruster (HET).
\end{abstract}

\let\thefootnote\relax\footnotetext{Distribution Statement A: Approved for Public Release; Distribution is Unlimited. PA Clearance AFRL-2022-1216 (Submitted for approval on March 2, 2022)}

\section{Introduction}

	When studying dynamical systems arising from real phenomenon, we often don't have a mathematical description, and instead must settle with observed data. Thus, many researchers and practitioners are using machine learning to ascertain properties of these systems, wherein there is ample data. One approach tries to infer the governing equations, whether represented symbolically or as a neural network. In our approach, we are not necessarily trying to reconstruct these equations, but rather to infer the system parameters, even when these equations are unknown. For example, in tackling the problem of parameter inference from the Lorenz system (which was developed as a simplified model of atmospheric convection \cite{DeterministicNonperiodicFlow}): 
	\begin{equation*}
		\begin{split}
			\frac{dx}{dt} &= \sigma (y - x) \\
			\frac{dy}{dt} &= x (\rho - z) - y \\
			\frac{dz}{dt} &= xy - \beta z
		\end{split}
	\end{equation*} 
	there are three parameters: $\sigma$ is the Prandtl number, $\rho$ is the Rayleigh number, and $\beta$ is related to the physical proportions of the region under consideration \cite{SparrowLorenz}. In our setting, we may only have access to trajectories or time series with system parameter labels, and our goal is to infer the system parameters of new trajectories or time series. 

	In this work, we introduce ID-ODE -- Inference by Differentiation and Observing Delay Embeddings. This method tackles the following scenario: We seek to identify system parameters of unlabeled trajectories, given that we have a dataset of trajectory data with labeled system parameters. These unlabeled trajectories may even have parameter labels not present in the training data, and thus we must interpolate. So the neural network must not only learn the relationship between a dynamical system and the system parameters, but it must learn the dynamical system itself as well. This gets complicated when we only have time series data -- a frequent occurrence in the real world -- and must reconstruct the state space. Our contribution lies in the insight that we can utilize delay embeddings for time series in order to reconstruct the state space giving us a proper ODE. We then use neural networks to learn the velocity operator/right-hand side of the ODE. 
    Intuitively, the velocity operator is usually much simpler to describe, and thus perhaps learn, than the solution of the ODE: the Lorenz system has a right-hand side that is a second-order polynomial, but the trajectories can be quite complex -- we don't even have a closed-form solution. Indeed, a big reason we study ODEs is because it's easier to describe a system this way, than using the solution. Put simply: it's easier to describe the rules than to describe the outcome.
 
    We bring this intuition to time series: we delay embed the time series, and then learn the velocity/right-hand side with a neural network. And because one can backpropagate through the neural network, then we can construct gradients in parameter space in order to perform parameter inference. We show delay embeddings also have benefits even when we have the full state space, as delay embedding the full state space improves performance, acting as a smoothing regularizer. 

    Our paper is organized as follows: We first provide an overview of Related Works (Section \ref{sec:related-works}), from previous approaches to parameter inference, and also discuss related fields. We then dive into the Methods (Section \ref{sec:methods}), where we provide an overview of of dynamical systems and ID-ODE. After, we provide an Analysis (Section \ref{sec:analysis}), where we prove our method converges in the practical setting where the mapping from system parameter to dynamical system is affine, and we only have trajectory points. Finally, in the Experiments (Section \ref{sec:experiments}), we demonstrate the effectiveness of ID-ODE on a number of popular chaotic systems, as well as a real-world example with the Hall-effect Thruster.
	
\section{Related Works}
\label{sec:related-works}

Now that there is an abundance of data, applying machine learning techniques to dynamical systems has become a recent trend. Some researchers have taken to performing system identification by identifying the governing symbolic equations through data \cite{Brunton3932, doi:10.1137/18M1191944, Champion22445}, while others have taken to representing the governing equations through neural networks \cite{661124, raissi2018multistep, doi:10.1137/18M1188227, NEGRINI2021110549, DBLP:journals/corr/abs-2110-08382}. Delay embeddings in terms of system identification have also been examined in \cite{kamb2020time, doi:10.1137/18M1188227}. Although often in system identification, the dynamical parameters are fixed, and thus there is a need for approaches that directly perform parameter inference.

In terms of parameter inference, approaches from an optimal transport view are considered in \cite{yang2021optimal}, where they know the form of the equation. In \cite{doi:10.1063/1.5085780}, they also take an optimal transport view but infer only one parameter. Parameter inference using the architecture of Echo State Networks has also been explored in \cite{alao2021discovering}, where they work on the Lorenz equation.

From the perspective of simulations, using neural networks to infer parameters from a have gone back as far as \cite{grzeszczuk1998neuroanimator} which was applied to the field of animations and took a control theory view. And \cite{de2018end} also constructs a gradient in parameter space like us, but is applied to simulations where the full state space is accessible and without noise.

In the field of Time Series Classification \cite{bagnall2017great}, they seek to classify time series with a label, much like in computer vision where practitioners seek to classify images. Such algorithms like ROCKET \cite{dempster2020rocket}, HIVE-COTE \cite{lines2016hive}, MiniRocket \cite{dempster2021minirocket} have been proposed. Compared to our method, we are solving a different problem, where additionally we seek to identify time series with labels not originally found in the training day, but must be interpolated.

\section{Methods}
\label{sec:methods}

Here we provide an overview of the concepts and techniques that make up ID-ODE. We first present the method of delay embeddings for time series in order to reconstruct the state space of the system. We then give a brief discussion of dynamical systems, and provide common notation. Afterwards we explain the learning phase of the neural network, which seeks to learn the dynamics operator. And finally, we explain the inference stage, where we produce gradients in parameter space in order to infer system parameters.

\subsection{Data preprocessing: Delay Embedding}

\subsubsection{Delay Embedding}

    When dealing with real-world dynamical systems, we often are only given a one-dimensional time series, but to \emph{uniquely} identify the state of the (deterministic) system we require access to the full state space. Failure to fully span the state space makes the system appear non-deterministic, but this is only an artifact of not having the full state. Such issues often arise in experimental time series of nonlinear systems, and are particularly tricky to deal with when the state space and physical model are both unknown. Most, or almost all, measurement of the system is unlikely to be a state variable, and in addition, the number of measured variables is likely less than the dimension of the state space. To further complicate matters, the introduction of noise in the measurements also adds another layer of complexity, which will be discussed toward the end of this subsection.
    
    A clever workaround to all these predicaments is to introduce time delay embedding as a preprocessing step for the data. This embedding technique is possible due to the famous Takens' Embedding Theorem and is widely used in time series analysis, particularly in the presence of chaotic orbits \cite{takens1981detecting}. Takens' Embedding Theorem states that an $n$-dimensional (possibly fractal) manifold of a time series can be reconstructed diffeomorphically by creating a time delay vector of at most $d = 2n$ entries, each with a fixed delay of $\tau$ \cite{sauer1991embedology}. The number of entries may be less than the embedding dimension and this varies between systems\textsuperscript{1}.
    
    \begin{equation*}
        \textbf{y}(t) := (y(t), y(t - \tau), y(t - 2\tau), \ldots, y(t - (d-1) \tau))
    \end{equation*}
    
    \footnotetext{\textsuperscript{1}For example, the Lorenz system with standard parameters has a fractal box-counting dimension of $n=2.06$. Takens' theorem states that we need at most $m = 5 > 4.12$ components in the time delay vector to reconstruct the attractor. As it happens, the Lorenz attractor can be reconstructed with just 3 components in the time delay vector.}   
    
    The time delay embedding is equivalent to incorporating information of higher order derivatives as surrogate state variables. There are additional considerations, such as having $\tau$ practically large enough for the states $y(t)$ and $y(t-\tau)$ may sufficiently decorrelate. Having $\tau$ too small means that the states $y(t)$ and $y(t-\tau)$ are strongly correlated, therefore containing no additional information.
    
\subsubsection{Minimum Embedding}

    The act of using time delay embedding is akin to unfolding the attractor such that \emph{artificially} intersecting trajectories no longer intersect. Such ``artificially intersecting" trajectories are also called ``false neighbors". The most prevalent tests involving the determination of minimum embedding dimension will test, in some form or another, the ratio of ``neighborliness" observed. The Kennel \cite{kennel1992determining, kennel2002false} and Cao \cite{cao1997practical} algorithms both quantify ``neighborliness" by tracking the average distance between a point and its nearest neighbour as the embedding dimension increases. The False First Nearest Neighbor algorithm \cite{krakovska2015use} tracks ``neighborliness" depending on whether nearest neighbors remain nearest neighbors after consecutive embeddings. Both methods are suitable for our application, and because neither of them have a clear advantage, we employ both methods in our determination of minimum embedding.

\subsection{The dynamical system}

	We consider a dynamical system,
		\begin{equation}\label{eqn:dynamicalsystem}
			\frac{d}{dt}\textbf{x}(t, \boldsymbol{\alpha}) = F(\textbf{x}(t, \boldsymbol{\alpha}), \boldsymbol{\alpha}), \quad \textbf{x}_0\in\mathbb{R}^n
		\end{equation}
    with $\textbf{x}_0\in\mathbb{R}^n$, $\boldsymbol{\alpha}\in A \subseteq \mathbb{R}^m$, $\textbf{x}:[0, T], A \rightarrow \mathbb{R}^n$, $F:\mathbb{R}^{n+m} \rightarrow \mathbb{R}^n$, and $t\in [0, T]$. For notational simplicity, we write $\textbf{x}(t) = \textbf{x}(t, \boldsymbol{\alpha})$, where the dependence on $\boldsymbol{\alpha}$ is implicit. For the system in eq. \eqref{eqn:dynamicalsystem}, we assume sufficient regularity conditions (e.g. $F$ is uniformly Lipschitz in $\textbf{x}$ for each $\boldsymbol{\alpha}$), so that existence and uniqueness is guaranteed \cite{coddington1955theory}. 
	
\subsection{The Learning Stage: Training to minimize integration error}
    \label{subsec:learning_stage}
	
	The Learning Stage in our method requires training a neural network to learn the velocity map of the ODE. More precisely, given that our data consists of trajectories and corresponding parameters,
		\begin{equation*}
			\left\{ \left(\{\textbf{x}(t_k, \boldsymbol{\alpha})\}_{k=1}^N, \boldsymbol{\alpha}\right) \right\}_{\boldsymbol{\alpha} \in A}
		\end{equation*}
	then we want to minimize
		\begin{equation}\label{eqn:nnloss}
			\min_\theta \mathbb{E}_{k, \boldsymbol{\alpha}}\left[ \frac{1}{2} \left\|\frac{\textbf{x}(t_{k+1}) - \textbf{x}(t_k)}{\Delta t} - F_\theta(\textbf{x}(t_k), \boldsymbol{\alpha}) \right\|^2 \right]
		\end{equation}
    where the expectation is taken over trajectory points, $\textbf{x}(t_k)$ and $\textbf{x}(t_{k+1})$, and system parameters, $\boldsymbol{\alpha}$. If performance is not sufficient with just forward Euler integration, we have found delay embedding the full state space can improve the method. Namely for $\tau, d > 0$, if we let,
        \begin{equation*}
            \tilde{\textbf{x}}(t_k) = (\textbf{x}(t_k), \textbf{x}(t_{\tau(k-1)}), \ldots, \textbf{x}(t_{\tau(k-(d-1))}))
        \end{equation*}
    then we can minimize the following,
        \begin{equation*}
            \min_\theta \mathbb{E}_{k, \boldsymbol{\alpha}}\left[ \frac{1}{2} \left\|\frac{\tilde{\textbf{x}}(t_{k+1}) - \tilde{\textbf{x}}(t_k)}{\Delta t} - F_\theta(\tilde{\textbf{x}}(t_k), \boldsymbol{\alpha}) \right\|^2 \right]
        \end{equation*}
    We note that if we take intuition from numerical schemes, we might choose $\tau = 1$ (i.e. consecutive time delays inspired by multi-step integration schemes), but we demonstrate empirically that this does not always improve performance. Rather, one needs $\tau$ to be bigger, i.e. spaced out time delays, and we show this improves the method, acting as a smoothing regularizer. These points are discussed and expanded upon in Section \ref{sec:compound-double-pendulum-full}.
	
\subsection{The Inference Stage: Producing gradients in parameter space}
    \label{subsec:inference_stage}
	
	In the Inference Stage, we complete our goal in identifying system parameters of unlabeled trajectories. After training a neural network $F_\theta = F_\theta(\textbf{x}, \boldsymbol{\alpha})$ as in the Learning Stage, we now have a computationally differentiable operator -- $F_\theta$ -- with which we can produce gradients with respect to system parameters. Then given a trajectory $\{\textbf{x}(t_k)\}_{k=1}^N$ whose corresponding system parameters are unknown, this parameter (or an equivalent parameter) must be the argument minimum of the following loss
		\begin{equation*}
			\min_{\boldsymbol{\alpha}} \mathbb{E}_{k}\left[ \frac{1}{2} \left\|\frac{\textbf{x}(t_{k+1}) - \textbf{x}(t_k)}{\Delta t} - F_\theta(\textbf{x}(t_k), \boldsymbol{\alpha}) \right\|^2 \right]
		\end{equation*}
	which we note is similar to eq. \eqref{eqn:nnloss}, with the crucial differences being that we now minimize with respect to $\boldsymbol{\alpha}$ and the expectation is now only in $k$. Now since $F_\theta$ is differentiable, then we have available the following gradient descent update rule:
		\begin{equation*}
			\boldsymbol{\alpha}_{j+1} = \boldsymbol{\alpha}_j - h\, \nabla_{\boldsymbol{\alpha}} \left( \mathbb{E}_{k}\left[ \frac{1}{2} \left\|\frac{\textbf{x}(t_{k+1}) - \textbf{x}(t_k)}{\Delta t} - F_\theta(\textbf{x}(t_k), \boldsymbol{\alpha}) \right\|^2 \right] \right)
		\end{equation*}
	with $h$ the gradient step-size. Of course we can also use other gradient update rules, but we display the simple gradient descent method for simplicity. In our numerical experiments, we take advantage of automatic differentiation to compute gradients.

\section{Analysis}
\label{sec:analysis}

    We provide analysis of the Inference stage of our method from varying perspectives. We will mainly be assuming that the mapping from the parameters to the velocity field, i.e. $\boldsymbol{\alpha} \mapsto F(\textbf{x}, \boldsymbol{\alpha})$ is linear or affine for any fixed $\textbf{x}$, which actually covers a variety of commonly encountered chaotic maps: Lorenz, Lorenz96, Van der Pol oscillator, Lotka-Volterra Preday-Prey, R\"{o}ssler's attractor, Chua's circuit, etc \cite{wiki:List_of_chaotic_maps}. We demonstrate this for a couple systems in Section \ref{sec:linear-and-affine}. Of course, we still need a neural network to learn the \emph{nonlinear} (as a function of $\textbf{x}$ and $\boldsymbol{\alpha}$) dynamics, as we only have access to trajectory points and parameter labels, which is the main motivation for these theorems. 
    
    For theoretical analysis, we change our expectation over time $t$ instead of an expectation over trajectory points with indices $k$. Finally, we note throughout our analysis, $\| \cdot \|$ will either denote a vector norm, or a matrix/operator norm, the context clarifying.

\subsection{Time-step error}
    
    We firstly show convergence as $\Delta t \rightarrow 0$, when we have the true velocity mapping $F$ available.
    
    \begin{theorem}[Linear]
    \label{theorem:dt-convergence}
        Suppose we are given a dynamical system,
            \begin{equation*}
                \frac{d}{dt}\textbf{x}(t, \boldsymbol{\alpha}) = F(\textbf{x}(t, \boldsymbol{\alpha}), \boldsymbol{\alpha}), \quad t\in [0, T + \delta],
            \end{equation*}
        where $F$ is smooth with respect to $\textbf{x}$, and such that $L_{\textbf{x}}:\boldsymbol{\alpha}\mapsto F(\textbf{x}, \boldsymbol{\alpha})$ is a linear map for all $\textbf{x}$.
        
        Let $\Delta t < \delta$, $\mathcal{I} = [0, T]$, and $\{\textbf{x}(t, \boldsymbol{\alpha}_0)\}_{t\in\mathcal{I}}$ be a trajectory generated by the dynamical system such that $L_{\textbf{x}(t, \boldsymbol{\alpha}_0)}$ is uniformly bounded for $t\in \mathcal{I}$, and $\mathbb{E}_{t\in\mathcal{I}}[(L_{\textbf{x}(t, \boldsymbol{\alpha}_0)})^\top L_{\textbf{x}(t, \boldsymbol{\alpha}_0)}]$ has a bounded inverse. Then if
            \begin{equation*}
                \boldsymbol{\alpha}_{\Delta t} = \argmin_{\boldsymbol{\alpha}} \mathbb{E}_{t \in \mathcal{I}} \left[ \left\| \frac{\textbf{x}(t + \Delta t, \boldsymbol{\alpha}_0) - \textbf{x}(t, \boldsymbol{\alpha}_0)}{\Delta t} - F(\textbf{x}(t, \boldsymbol{\alpha}_0), \boldsymbol{\alpha}) \right\|^2 \right]
            \end{equation*}
        then $\boldsymbol{\alpha}_{\Delta t} \rightarrow \boldsymbol{\alpha}_0$ as $\Delta t \rightarrow 0$.

    \end{theorem}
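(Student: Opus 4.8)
The plan is to exploit the linearity of $L_{\textbf{x}}$ to turn the objective into a weighted least-squares problem whose minimizer has a closed form, and then show that the only data-dependent quantity in that form converges as $\Delta t \to 0$. Since $L_{\textbf{x}}:\boldsymbol{\alpha}\mapsto F(\textbf{x},\boldsymbol{\alpha})$ is linear for each $\textbf{x}$, I would represent it by a matrix and write $F(\textbf{x},\boldsymbol{\alpha}) = L_{\textbf{x}}\boldsymbol{\alpha}$. Abbreviating $L_t := L_{\textbf{x}(t,\boldsymbol{\alpha}_0)}$ and introducing the finite-difference velocity
\[
    \textbf{v}_{\Delta t}(t) := \frac{\textbf{x}(t+\Delta t,\boldsymbol{\alpha}_0) - \textbf{x}(t,\boldsymbol{\alpha}_0)}{\Delta t},
\]
the objective becomes $J_{\Delta t}(\boldsymbol{\alpha}) = \mathbb{E}_{t\in\mathcal{I}}\big[\|\textbf{v}_{\Delta t}(t) - L_t\boldsymbol{\alpha}\|^2\big]$, which is quadratic in $\boldsymbol{\alpha}$.

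Next, I would set the gradient to zero to obtain the normal equations $\mathbb{E}_{t}[L_t^\top L_t]\,\boldsymbol{\alpha} = \mathbb{E}_{t}[L_t^\top \textbf{v}_{\Delta t}(t)]$. The hypothesis that $G := \mathbb{E}_{t}[(L_{\textbf{x}(t,\boldsymbol{\alpha}_0)})^\top L_{\textbf{x}(t,\boldsymbol{\alpha}_0)}]$ has a bounded inverse makes $J_{\Delta t}$ strictly convex, so the minimizer is unique and given explicitly by
\[
    \boldsymbol{\alpha}_{\Delta t} = G^{-1}\,\mathbb{E}_{t}[L_t^\top \textbf{v}_{\Delta t}(t)].
\]

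The final step is to show $\mathbb{E}_{t}[L_t^\top \textbf{v}_{\Delta t}(t)] \to G\boldsymbol{\alpha}_0$. Since $F$ is smooth in $\textbf{x}$ and the trajectory lives on the compact interval $[0, T+\delta]$, the solution $\textbf{x}(\cdot,\boldsymbol{\alpha}_0)$ is $C^2$ with uniformly bounded second derivative; Taylor's theorem then gives $\textbf{v}_{\Delta t}(t) = \dot{\textbf{x}}(t,\boldsymbol{\alpha}_0) + O(\Delta t)$ uniformly in $t\in\mathcal{I}$, while the governing equation identifies $\dot{\textbf{x}}(t,\boldsymbol{\alpha}_0) = L_t\boldsymbol{\alpha}_0$. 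Combined with the uniform bound on $L_t$, this lets me pass the limit through the finite-measure expectation to get $\mathbb{E}_{t}[L_t^\top \textbf{v}_{\Delta t}(t)] \to \mathbb{E}_{t}[L_t^\top L_t]\boldsymbol{\alpha}_0 = G\boldsymbol{\alpha}_0$, whence $\boldsymbol{\alpha}_{\Delta t} = G^{-1}\mathbb{E}_{t}[L_t^\top \textbf{v}_{\Delta t}(t)] \to G^{-1}G\boldsymbol{\alpha}_0 = \boldsymbol{\alpha}_0$.

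I expect the genuinely essential ingredient, as opposed to the routine Taylor estimate, to be the invertibility of $G$. Linearity alone does not pin down $\boldsymbol{\alpha}_0$: a perturbation lying in the common kernel of the $L_t$ would leave every velocity $L_t\boldsymbol{\alpha}$ unchanged, so some nondegeneracy is indispensable for identifiability. The assumption that $G$ has a bounded inverse is exactly what rules this out and simultaneously guarantees the minimizer is unique and stable, while the uniform boundedness of $L_t$ is what dominates the integrand and legitimizes the interchange of limit and expectation in the last step.
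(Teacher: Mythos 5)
Your proposal is correct and follows essentially the same route as the paper's proof: write the objective as a least-squares problem in $\boldsymbol{\alpha}$, solve the normal equations to get $\boldsymbol{\alpha}_{\Delta t} = \mathbb{E}_t[L_t^\top L_t]^{-1}\mathbb{E}_t[L_t^\top \textbf{v}_{\Delta t}(t)]$, identify $\boldsymbol{\alpha}_0$ as the same expression with the true velocity in place of the difference quotient, and use the bounded inverse together with the uniform bound on $L_t$ to pass the limit $\Delta t \to 0$ through the expectation. Your Taylor-expansion justification of the uniform convergence of the difference quotient is, if anything, slightly more explicit than the paper's appeal to the definition of the derivative.
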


    The proof is found in Section \ref{app:dt-convergence-proof}. The above problem is convex because the mapping $\boldsymbol{\alpha} \mapsto F(\textbf{x}, \boldsymbol{\alpha})$ is linear in $\boldsymbol{\alpha}$. Note the assumption that $\mathbb{E}_{t\in\mathcal{I}}[(L_{\textbf{x}(t, \boldsymbol{\alpha}_0)})^\top L_{\textbf{x}(t, \boldsymbol{\alpha}_0)}]$ has a bounded inverse implies each $\boldsymbol{\alpha}$ generates a unique vector field $F$. Theorem \ref{theorem:dt-convergence} covers the case when $\boldsymbol{\alpha} \rightarrow F(\textbf{x}, \boldsymbol{\alpha})$ is linear. The case when it is instead affine follows easily,

    \begin{corollary}[Affine]
    \label{corollary:dt-convergence-affine}
        Suppose we are given a dynamical system,
            \begin{equation*}
                \frac{d}{dt}\textbf{x}(t, \boldsymbol{\alpha}) = F(\textbf{x}(t, \boldsymbol{\alpha}), \boldsymbol{\alpha}), \quad t\in [0, T + \delta],
            \end{equation*}
        where $F$ is smooth, and such that, $\Delta t < \delta$, $\mathcal{I} = [0, T]$, and $(L_{\textbf{x}} + b_\textbf{x}):\boldsymbol{\alpha}\mapsto F(\textbf{x}, \boldsymbol{\alpha})$ is an affine map (with $L_\textbf{x}$ the linear part) for all $\textbf{x}$. If $\{\textbf{x}(t, \boldsymbol{\alpha}_0)\}_{t\in\mathcal{I}}$ is a trajectory with the same assumptions in Theorem \ref{theorem:dt-convergence}, then $\boldsymbol{\alpha}_{\Delta t} \rightarrow \boldsymbol{\alpha}_0$ as $\Delta t \rightarrow 0$.
    \end{corollary}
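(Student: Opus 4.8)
The plan is to reduce the affine case directly to the linear case already established in Theorem~\ref{theorem:dt-convergence} by absorbing the $\boldsymbol{\alpha}$-independent term of the velocity field into the data. Writing the affine map as $F(\textbf{x}, \boldsymbol{\alpha}) = L_{\textbf{x}}\boldsymbol{\alpha} + b_{\textbf{x}}$ and abbreviating $L_t := L_{\textbf{x}(t, \boldsymbol{\alpha}_0)}$, I first expand the objective being minimized and observe that $b_{\textbf{x}(t, \boldsymbol{\alpha}_0)}$ depends only on the fixed trajectory, not on the optimization variable $\boldsymbol{\alpha}$. Hence, defining the modified finite-difference data
\[
    \tilde{D}_{\Delta t}(t) := \frac{\textbf{x}(t + \Delta t, \boldsymbol{\alpha}_0) - \textbf{x}(t, \boldsymbol{\alpha}_0)}{\Delta t} - b_{\textbf{x}(t, \boldsymbol{\alpha}_0)},
\]
the loss becomes exactly $\mathbb{E}_{t\in\mathcal{I}}\bigl[\|\tilde{D}_{\Delta t}(t) - L_t\boldsymbol{\alpha}\|^2\bigr]$, which is precisely the objective of Theorem~\ref{theorem:dt-convergence} with the raw difference quotient replaced by $\tilde{D}_{\Delta t}$.

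Second, I would verify the single convergence fact needed to transplant the linear argument: that $\tilde{D}_{\Delta t}(t) \to L_t\boldsymbol{\alpha}_0$ as $\Delta t \to 0$. This follows because the true dynamics give $\frac{\textbf{x}(t+\Delta t) - \textbf{x}(t)}{\Delta t} \to \frac{d\textbf{x}}{dt} = F(\textbf{x}(t), \boldsymbol{\alpha}_0) = L_t\boldsymbol{\alpha}_0 + b_{\textbf{x}(t)}$, so subtracting $b_{\textbf{x}(t)}$ leaves exactly the linear part $L_t\boldsymbol{\alpha}_0$ in the limit. Since $b_{\textbf{x}} = F(\textbf{x}, \boldsymbol{0})$ is continuous in $\textbf{x}$ (by smoothness of $F$) and the trajectory is continuous on the compact interval $\mathcal{I}$, the term $b_{\textbf{x}(t, \boldsymbol{\alpha}_0)}$ is bounded, so every expectation is well-defined and the convergence may be taken in the same sense (e.g. $L^2(\mathcal{I})$) used for the difference quotient in the linear proof.

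Third, because $\tilde{D}_{\Delta t}$ enters the objective in exactly the same way the difference quotient does in the linear case, and because the linear part $L_t$, its uniform bound, and the invertibility of $\mathbb{E}_{t\in\mathcal{I}}[L_t^\top L_t]$ are all inherited unchanged from the corollary's hypotheses, I can invoke Theorem~\ref{theorem:dt-convergence} essentially verbatim. Concretely, the normal equations give the closed form
\[
    \boldsymbol{\alpha}_{\Delta t} = \left( \mathbb{E}_{t\in\mathcal{I}}[L_t^\top L_t] \right)^{-1} \mathbb{E}_{t\in\mathcal{I}}[L_t^\top \tilde{D}_{\Delta t}(t)],
\]
and sending $\Delta t \to 0$ drives the right-hand side to $\bigl(\mathbb{E}[L_t^\top L_t]\bigr)^{-1}\mathbb{E}[L_t^\top L_t \boldsymbol{\alpha}_0] = \boldsymbol{\alpha}_0$.

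The reduction is essentially bookkeeping, so I do not anticipate a deep obstacle; the only point requiring care is ensuring the proof of Theorem~\ref{theorem:dt-convergence} is phrased abstractly enough to accept $\tilde{D}_{\Delta t}$ in place of the raw finite difference. If that proof is written specifically in terms of the difference quotient, I would either restate its key step for a general data sequence converging to $L_t\boldsymbol{\alpha}_0$, or simply repeat the short normal-equations computation above with $\tilde{D}_{\Delta t}$, which is routine.
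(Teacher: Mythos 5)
Your proposal is correct and follows essentially the same route as the paper: the paper's proof likewise absorbs the offset $b_{\textbf{x}(t, \boldsymbol{\alpha}_0)}$ into the finite-difference data (defining $\textbf{y}_{\Delta t}(t)$ exactly as your $\tilde{D}_{\Delta t}(t)$) and then reruns the least-squares argument of Theorem~\ref{theorem:dt-convergence}. Your additional verification that $\tilde{D}_{\Delta t}(t) \to L_t\boldsymbol{\alpha}_0$ is a welcome but minor elaboration of the same idea.
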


    The proof is contained in Section \ref{corollary:dt-convergence-affine-proof}. Thus Theorem \ref{theorem:dt-convergence} and Corollary \ref{corollary:dt-convergence-affine} both show convergence to the true parameters as $\Delta t$ goes to zero. This, of course, has practical applications as data is only received as points on trajectories, and we don't have access to the true velocity. As a side remark, we can view the finite difference approximation to $F$ as integration noise, and thus when we train we are fitting a noisy signal.

\subsection{Velocity approximation error}

    We also show convergence of parameters when the approximation error converges to zero. Namely, suppose we don't have access to the true velocity, but rather an approximation. Yet the trajectories $\{\textbf{x}(t, \boldsymbol{\alpha}_0)\}_t$ we want to label are generated by the true velocity map.  We let $F$ be the true velocity map, and $F_\theta$ be an approximation such that,
        \begin{equation*}
            F_\theta(\textbf{x}, \boldsymbol{\alpha}) = F(\textbf{x}, \boldsymbol{\alpha}) + e(\textbf{x}, \boldsymbol{\alpha})
        \end{equation*}
    where $e$ is the error. If $F$ is smooth and the function class we are approximating over, e.g. a neural network, is smooth then $e$ is also smooth. Then if we assume certain conditions on $e$, we can make a statement about convergence of parameters:

    \begin{theorem}
        \label{theorem:approx-error-alpha}
        Suppose we are given a dynamical system,
            \begin{equation*}
                \frac{d}{dt}\textbf{x}(t, \boldsymbol{\alpha}) = F(\textbf{x}(t, \boldsymbol{\alpha}), \boldsymbol{\alpha}), \quad t\in [0, T+\delta],
            \end{equation*}
        with $F$ smooth and $\boldsymbol{\alpha} \mapsto F(\textbf{x}, \boldsymbol{\alpha})$ affine for every $\textbf{x}$, and for $\boldsymbol{\alpha} \in A$, with $A$ open and bounded. Let $\Delta t < \delta$, $\mathcal{I} = [0, T]$, and $F_\theta$ be a smooth approximation such that,
            \begin{equation*}
                F_\theta(\textbf{x}, \boldsymbol{\alpha}) = F(\textbf{x}, \boldsymbol{\alpha}) + e(\textbf{x}, \boldsymbol{\alpha}).
            \end{equation*}
        for all $t\in\mathcal{I}$ and for all $\boldsymbol{\alpha} \in A$. Let $\{\textbf{x}(t, \boldsymbol{\alpha}_0)\}_{t\in \mathcal{I}}$ be a trajectory with the assumptions of Corollary \ref{corollary:dt-convergence-affine}, and such that
            \begin{equation}\label{eq:sobolev}
                \|e(\textbf{x}(t, \boldsymbol{\alpha}_0), \boldsymbol{\alpha})\|  + \|J_{e(\textbf{x}(t, \boldsymbol{\alpha}_0), \cdot)} (\boldsymbol{\alpha}) \| \le \varepsilon
            \end{equation}
        (where $J_{e(\textbf{x}(t, \boldsymbol{\alpha}_0), \cdot)}$ is the Jacobian of $e$ with respect to $\boldsymbol{\alpha}$) for all $t\in\mathcal{I}$ and for all $\boldsymbol{\alpha}\in A$. If 
            \begin{equation*}
                \alpha_{\Delta t, \varepsilon} = \argmin_{\boldsymbol{\alpha}} \mathbb{E}_{t \in \mathcal{I}} \left[ \left\| \frac{\textbf{x}(t + \Delta t, \boldsymbol{\alpha}_0) - \textbf{x}(t, \boldsymbol{\alpha}_0)}{\Delta t} - F_\theta(\textbf{x}(t, \boldsymbol{\alpha}_0), \boldsymbol{\alpha}) \right\|^2 \right]
            \end{equation*}
        then $\boldsymbol{\alpha}_{\Delta t, \varepsilon} \rightarrow \boldsymbol{\alpha}_{\Delta t}$ as $\epsilon \rightarrow 0$.
    \end{theorem}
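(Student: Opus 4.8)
The plan is to read this as a stability-of-minimizers statement and prove it by a standard perturbation argument: establish that the true objective is strongly convex, that replacing $F$ by $F_\theta$ perturbs the objective uniformly and in a $C^1$ sense, and then convert smallness of the gradient perturbation into smallness of the distance between minimizers via strong monotonicity. Throughout I would write $F(\textbf{x}, \boldsymbol{\alpha}) = L_\textbf{x}\boldsymbol{\alpha} + b_\textbf{x}$ for the affine map, abbreviate $\textbf{x}(t) = \textbf{x}(t, \boldsymbol{\alpha}_0)$, and let $v(t) = (\textbf{x}(t+\Delta t) - \textbf{x}(t))/\Delta t$ be the finite-difference velocity, which depends only on the fixed trajectory and not on the optimization variable. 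Denote the two objectives by $G(\boldsymbol{\alpha}) = \mathbb{E}_{t}[\|v(t) - F(\textbf{x}(t), \boldsymbol{\alpha})\|^2]$ and $G_\varepsilon(\boldsymbol{\alpha}) = \mathbb{E}_{t}[\|v(t) - F_\theta(\textbf{x}(t), \boldsymbol{\alpha})\|^2]$ (expectations over $t\in\mathcal{I}$), so that $\boldsymbol{\alpha}_{\Delta t} = \argmin_{\boldsymbol{\alpha}} G$ is the minimizer supplied by Corollary \ref{corollary:dt-convergence-affine} and $\boldsymbol{\alpha}_{\Delta t, \varepsilon} = \argmin_{\boldsymbol{\alpha}} G_\varepsilon$. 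Since $\boldsymbol{\alpha}\mapsto F(\textbf{x},\boldsymbol{\alpha})$ is affine, $G$ is a convex quadratic with Hessian $\nabla^2 G = 2\,\mathbb{E}_{t}[L_{\textbf{x}(t)}^\top L_{\textbf{x}(t)}]$; the assumption inherited from Theorem \ref{theorem:dt-convergence} that this matrix has a bounded inverse gives $\nabla^2 G \succeq 2\mu I$ for some $\mu>0$, so $G$ is $2\mu$-strongly convex, $\boldsymbol{\alpha}_{\Delta t}$ is its unique critical point with $\nabla G(\boldsymbol{\alpha}_{\Delta t})=0$, and $\nabla G$ is strongly monotone with modulus $2\mu$.

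Next I would establish a uniform $C^1$ perturbation bound, which is exactly where the Sobolev-type hypothesis \eqref{eq:sobolev} is used. Differentiating and subtracting gives $\nabla G_\varepsilon(\boldsymbol{\alpha}) - \nabla G(\boldsymbol{\alpha}) = \mathbb{E}_{t}[\,2 L_{\textbf{x}(t)}^\top e - 2 J_e^\top (v - F - e)\,]$, where $J_e = J_{e(\textbf{x}(t),\cdot)}(\boldsymbol{\alpha})$. Using that $L_{\textbf{x}(t)}$ is uniformly bounded on $\mathcal{I}$, that $\|v(t) - F(\textbf{x}(t),\boldsymbol{\alpha})\|$ is bounded uniformly over the bounded set $A$ (a finite difference of a bounded trajectory minus an affine function of bounded parameters), and that $\|e\|\le\varepsilon$ and $\|J_e\|\le\varepsilon$, I would bound $\sup_{\boldsymbol{\alpha}\in A}\|\nabla G_\varepsilon(\boldsymbol{\alpha}) - \nabla G(\boldsymbol{\alpha})\| \le C_1\varepsilon$ for a constant $C_1$ independent of $\varepsilon$ (say for $\varepsilon\le 1$, absorbing the $\varepsilon^2$ term).

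To conclude, I would combine strong monotonicity with first-order optimality of the perturbed problem. Assuming $\boldsymbol{\alpha}_{\Delta t,\varepsilon}$ is an interior minimizer, $\nabla G_\varepsilon(\boldsymbol{\alpha}_{\Delta t,\varepsilon})=0$, so $\|\nabla G(\boldsymbol{\alpha}_{\Delta t,\varepsilon})\| = \|\nabla G(\boldsymbol{\alpha}_{\Delta t,\varepsilon}) - \nabla G_\varepsilon(\boldsymbol{\alpha}_{\Delta t,\varepsilon})\| \le C_1\varepsilon$. Strong monotonicity of $\nabla G$ together with $\nabla G(\boldsymbol{\alpha}_{\Delta t})=0$ then yields $2\mu\|\boldsymbol{\alpha}_{\Delta t,\varepsilon} - \boldsymbol{\alpha}_{\Delta t}\|^2 \le \langle \nabla G(\boldsymbol{\alpha}_{\Delta t,\varepsilon}), \boldsymbol{\alpha}_{\Delta t,\varepsilon} - \boldsymbol{\alpha}_{\Delta t}\rangle \le C_1\varepsilon\,\|\boldsymbol{\alpha}_{\Delta t,\varepsilon} - \boldsymbol{\alpha}_{\Delta t}\|$, whence $\|\boldsymbol{\alpha}_{\Delta t,\varepsilon} - \boldsymbol{\alpha}_{\Delta t}\| \le (C_1/2\mu)\,\varepsilon \to 0$ as $\varepsilon\to0$, in fact at a linear rate.

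The main obstacle I anticipate is the step that is easy to overlook: justifying that $\boldsymbol{\alpha}_{\Delta t,\varepsilon}$ exists and lies in the interior of the open, bounded set $A$, since $G_\varepsilon$ need not be convex and an infimum over an open set need not be attained. I would handle this by a compactness-and-closeness argument: minimize $G_\varepsilon$ over the compact closure $\bar A$, and use the uniform value bound $\sup_{\boldsymbol{\alpha}\in A}|G_\varepsilon(\boldsymbol{\alpha}) - G(\boldsymbol{\alpha})| \le C_0\varepsilon$ (which follows from $\|e\|\le\varepsilon$ alone) together with the strong convexity of $G$ to trap any minimizer of $G_\varepsilon$ in a shrinking neighborhood of the interior point $\boldsymbol{\alpha}_{\Delta t}$, which is interior for $\Delta t$ small since $\boldsymbol{\alpha}_{\Delta t}\to\boldsymbol{\alpha}_0\in A$. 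As a fallback that avoids first-order conditions entirely, this same value comparison, $G(\boldsymbol{\alpha}_{\Delta t,\varepsilon}) - G(\boldsymbol{\alpha}_{\Delta t}) \le 2C_0\varepsilon$, substituted into the strong-convexity inequality $\mu\|\boldsymbol{\alpha}_{\Delta t,\varepsilon} - \boldsymbol{\alpha}_{\Delta t}\|^2 \le G(\boldsymbol{\alpha}_{\Delta t,\varepsilon}) - G(\boldsymbol{\alpha}_{\Delta t})$, already gives convergence at rate $O(\sqrt{\varepsilon})$ using only feasibility of $\boldsymbol{\alpha}_{\Delta t}$ and the bound on $\|e\|$; the gradient route above merely sharpens the rate to $O(\varepsilon)$, which is precisely why the Jacobian term appears in the Sobolev-type hypothesis \eqref{eq:sobolev}.
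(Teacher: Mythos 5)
Your proposal is correct and follows essentially the same route as the paper's proof: both perturb the first-order optimality condition, use the Sobolev-type bound \eqref{eq:sobolev} (together with boundedness of $L_{\textbf{x}(t)}$, of the finite-difference velocity, and of the parameter set $A$) to show the gradient of the perturbed objective differs from that of the exact one by $O(\varepsilon)$, and then invert $\mathbb{E}_{t\in\mathcal{I}}[L_{\textbf{x}(t)}^\top L_{\textbf{x}(t)}]$ --- which is exactly your strong-convexity/strong-monotonicity constant --- to conclude $\|\boldsymbol{\alpha}_{\Delta t,\varepsilon}-\boldsymbol{\alpha}_{\Delta t}\| = O(\varepsilon)$. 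Your extra care about existence and interiority of $\boldsymbol{\alpha}_{\Delta t,\varepsilon}$ in the open set $A$ tidies up a point the paper passes over silently, but it does not change the underlying argument.
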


    We note the condition eq. \ref{eq:sobolev} is a Sobolev-type condition on the error, wherein we not only require the error to converge to zero, but also its derivative. The proof is found in Section \ref{theorem:approx-error-alpha-proof}.

\subsection{Consistency of the time delay solution}

    In this theoretical analysis of the time delay, we follow the terminology of \cite{CASDAGLI199152}, and we also \emph{remove} the assumption of linearity/affine with respect to $\boldsymbol{\alpha}$; we only require smoothness of the velocity map with respect to the state. Now suppose we don't have access to the full state space $\boldsymbol{x}(t, \boldsymbol{\alpha})$, but only a time series $u(t, \boldsymbol{\alpha}) = h(\textbf{x}(t, \boldsymbol{\alpha}))$, where $h$ is the \emph{measurement} or \emph{observation} function. Further suppose that for a given delay $\tau$ and dimension $d$, 
        \begin{equation}\label{eq:time-series_delay}
            \underline{u}(t, \boldsymbol{\alpha}) = (u(t, \boldsymbol{\alpha}), u(t - \tau, \boldsymbol{\alpha}), u(t - 2\tau, \boldsymbol{\alpha}), \ldots, u(t - (d - 1)\tau, \boldsymbol{\alpha}))
        \end{equation}
    is an embedding with embedding map $\Phi$, i.e. the \emph{delay reconstruction map}
        \begin{equation}\label{eq:delay-reconstruction-map}
            \Phi(\textbf{x}(t, \boldsymbol{\alpha})) = \underline{u}(t, \boldsymbol{\alpha}),
        \end{equation}
    is a smooth one-to-one coordinate transformation. Unfortunately, there is not much theory characterizing these maps, which is beyond the scope of this work. But we can say that because $\Phi$ is smooth and invertible, then denoting $\underline{F}$ as the velocity of $\underline{u}$, we have,
        \begin{equation*}
            \begin{split}
            &\argmin_{\boldsymbol{\alpha}} \mathbb{E}_{t \in \mathcal{I}} \left[ \left\| F(\textbf{x}(t, \boldsymbol{\alpha}_0), \boldsymbol{\alpha}_0) - F(\textbf{x}(t, \boldsymbol{\alpha}_0), \boldsymbol{\alpha}) \right\|^2 \right] \\
            &= \argmin_{\boldsymbol{\alpha}} \mathbb{E}_{t \in \mathcal{I}} \left[ \left\| \underline{F}(\underline{u}(t, \boldsymbol{\alpha}_0), \boldsymbol{\alpha}_0) - \underline{F}(\underline{u}(t, \boldsymbol{\alpha}_0), \boldsymbol{\alpha}) \right\|^2 \right]
            \end{split}
        \end{equation*}
    which we prove in the appendix, as Theorem \ref{theorem:time-delay}. So practically, if we find a solution $\boldsymbol{\alpha}$ to the time delayed problem, then we are assured this is also a solution to the original problem with the full state.

\section{Experiments}
\label{sec:experiments}

We examine our method on the following systems: chaotic systems that are affine in the system parameters, the compound double pendulum which is also chaotic but nonlinear in the system parameters, and chaotic real-world data in the Hall-effect thruster (HET).

In all experiments, we use a feed-forward neural network with an input layer, three hidden layers, and an output layer. The hidden layers each have 2,000 nodes, and the activation functions are rectified linear units (ReLUs). Further hyperparameters for each experiment are detailed in Section \ref{sec:hyperparameters}.

\subsection{Chaotic System Affine in System Parameters}
\label{subsec:synthetic_data}

    \begin{table}[ht]
        \centering
        \begin{tabular}{|r|c|l|}
            \hline
            \textbf{Name}    & \textbf{System} & \textbf{Parameter  range} \\ \hline
            Lorenz           & %
            \parbox{6cm}{\begin{equation*}\begin{array}{rl} \dot{x} &= \sigma (y - x) \\ \dot{y} &= x (\rho - z) - y \\ \dot{z} &= xy - \beta z \end{array}\end{equation*}} & %
            \parbox{1cm}{\begin{equation*} \begin{split} \sigma &\in [9, 11] \\ \rho &\in [27, 29] \\ \beta &\in [2, 4] \end{split} \end{equation*}} \\ \hline
            Lorenz96         & %
            \parbox{6cm}{\begin{equation*}\begin{array}{rl} \dot{x}_i &= (x_{i+1} - x_{i-2})x_{i-1} - x_i + F \\ x_0 &= x_{N-1} \\ x_{-1} &= x_N \\  i &=1,\ldots, N \end{array}\end{equation*}} & %
            $F \in [10, 20]$ \\ \hline
            LV Predator-Prey & %
            \parbox{6cm}{\begin{equation*}\begin{array}{rl} \dot{x} &= \alpha x - \beta xy \\ \dot{y} &= \delta xy - \gamma y \end{array}\end{equation*}} & %
            $\alpha, \beta, \gamma, \delta \in [0.5, 1.5]$ \\ \hline
        \end{tabular}
        \caption{List of systems we examine in the experiments that are affine in the system parameters, along with the parameter ranges.}
        \label{table:systems}
    \end{table}

    \begin{table}[ht]
            \centering
            \begin{tabular}{|l|llll|}
            \hline
            \textbf{System (full state) \;\,}    & \textbf{$R^2$ value}      &                                       &                                        &                   \\ \hline
            Lorenz             & \multicolumn{1}{l|}{$\sigma$: 0.99980} & \multicolumn{1}{l|}{$\beta$: 0.99999} & \multicolumn{1}{l|}{$\rho$: 0.99997}   &                   \\ \hline
            Lorenz96           & \multicolumn{1}{l|}{$F$: 0.99999}      &                                       &                                        &                   \\ \hline
            LV Predator-Prey   & \multicolumn{1}{l|}{$\alpha$: 0.99941} & \multicolumn{1}{l|}{$\beta$: 0.99923} & \multicolumn{1}{l|}{$\gamma$: 0.99914} & $\delta$: 0.99885 \\ \hline 
            \end{tabular}\vspace{1em}
            \begin{tabular}{|l|llll|}
            \hline
            \textbf{System (time series)}  & \textbf{$R^2$ value}       &                                       &                                        &                   \\ \hline
            Lorenz           & \multicolumn{1}{l|}{$\sigma$: 0.99972}   & \multicolumn{1}{l|}{$\beta$: 0.99997} & \multicolumn{1}{l|}{$\rho$: 0.99671}   &                   \\ \hline
            Lorenz96         & \multicolumn{1}{l|}{$F$: 0.99999}        &                                       &                                        &                   \\ \hline
            LV Predator-Prey & \multicolumn{1}{l|}{$\alpha$: 0.99956}   & \multicolumn{1}{l|}{$\beta$: 0.99926} & \multicolumn{1}{l|}{$\gamma$: 0.99440} & $\delta$: 0.99860 \\ \hline
            \end{tabular}
        \caption{(Top) The $R^2$ value achieved by our model's prediction for each parameter of each system, when using the original, full state space. As can be observed, the model does really well in predicting the system parameters from trajectories. (Bottom) The $R^2$ value achieved by our model's prediction for each parameter of each system, when using just a time series. As can be observed, the model still does really well in predicting the system parameters from trajectories.}
        \label{table:r2_table}
    \end{table}

    We investigate our method on the systems found in Table \ref{table:systems}, where the training dataset consists of trajectories for parameters in the ranges found in the last column. These trajectories will come from a discrete set of parameters uniformly spaced over the parameter range. For example, for the Lorenz system the trajectories will come from the parameters,
	    \begin{equation*}
	        9 \le \sigma \le 11, \quad 27 \le \rho \le 29, \quad 2 \le \beta \le 4,
	    \end{equation*}
    with a discrete grid spacing of $0.2$ for each parameter, e.g. $(\sigma, \rho, \beta) = (9.2, 28.6, 3.4)$, or $(\sigma, \rho, \beta) = (10.8, 27.0, 3.2)$. For each system, we produce a trajectory to time $t=1,000$, with $100,000$ time-steps (so $\Delta t = 0.01$). The specific details of the training dataset will be found in Section \ref{sec:training-dataset-info}. Our testing dataset consists of 1,000 trajectories for parameters sampled uniformly in the parameter range. We note that for the Lorenz system, within the parameter ranges we have chosen there is a phase transition, where the behaviour moves from chaotic with 2 unstable equilibrium points, to stable with 1 equilibrium point. This demonstrates the ability of our method to deal with such transitions in the data.
	
	We apply our method on two cases: Case 1 is utilizing the original, full state space given in the middle column of Table \ref{table:systems}, and Case 2 is only using the time series given by the first coordinate, but time delaying to reconstruct the state space.

\subsubsection{Using the original state space} 

    \begin{figure}[ht]
        \centering
        \begin{minipage}{0.6\textwidth}
            \begin{framed}
                \centering
                \includegraphics[width=\textwidth]{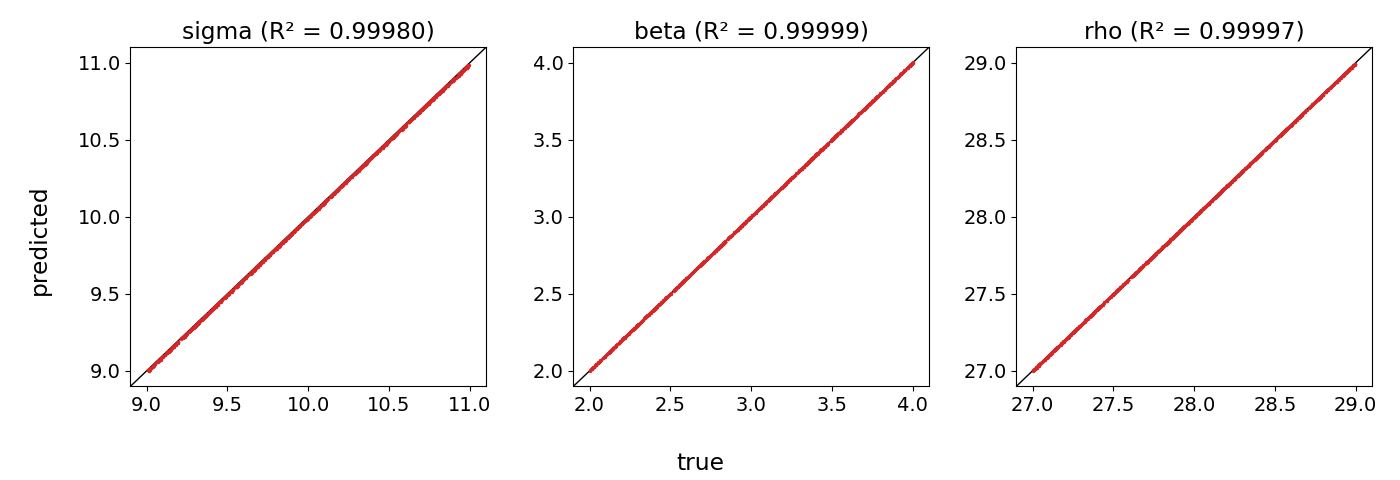}
                \caption*{Lorenz}
            \end{framed}
        \end{minipage}
        \begin{minipage}{0.3\textwidth}
            \begin{framed}
                \centering
                \includegraphics[width=0.74\textwidth]{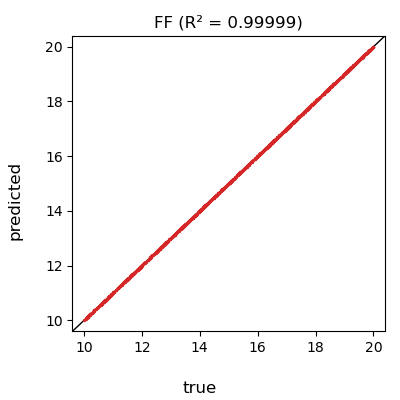}
                \caption*{Lorenz96}
            \end{framed}
        \end{minipage}\vspace{0.5em}
        \begin{minipage}{0.9\textwidth}
            \begin{framed}
                \centering
                \includegraphics[width=\textwidth]{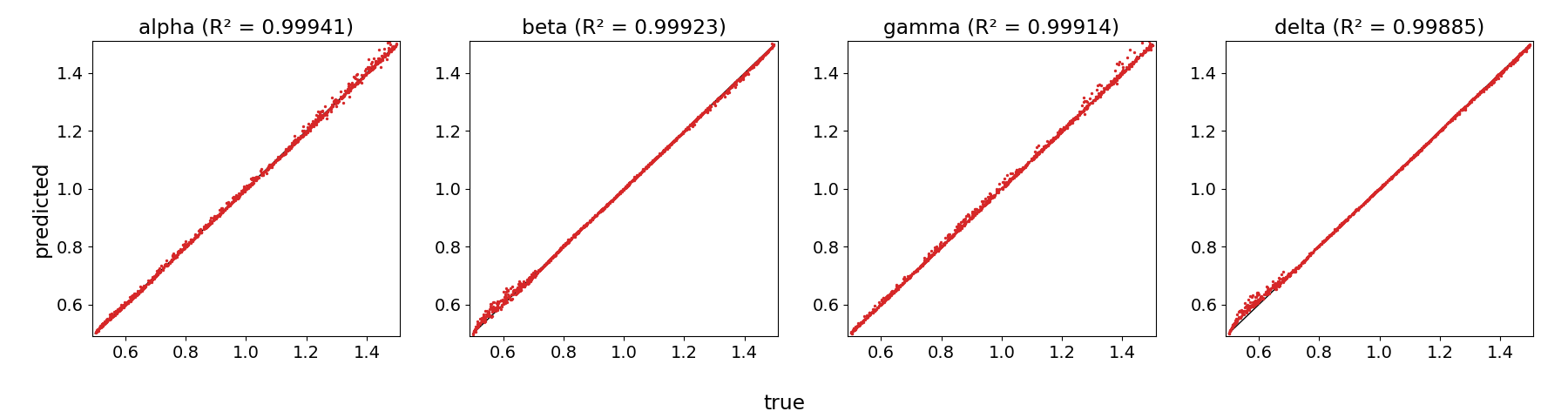}
                \caption*{Lotka-Volterra Predator-Prey}
            \end{framed}
        \end{minipage}
        \caption{The performance of the model's inferred parameters (predicted) vs. the true parameters, when using the original, full, state space. The $R^2$ values of each parameter are also given. As can be observed, the model achieves an $R^2$ value greater than $0.99$ on all parameters for each system.}
        \label{fig:fullstate_r2_plot}
    \end{figure} 
	
	We first examine our method's performance on each dynamical system when it has access to the full, original state space. After learning the velocity operator, we evaluate the model's inference performance by computing the $R^2$ value for each parameter on the test set, which are 1,000 parameters sampled uniformly in the parameter ranges specified in Table \ref{table:systems}. 
 
    The plots of the model's inferred parameters v.s. the true parameters are show in Figure \ref{fig:fullstate_r2_plot} and in Table \ref{table:r2_table} (top) we compute the $R^2$ values achieved by our model for each parameter. As can be observed, the model does pretty well -- on all parameters for each system, the model achieves an $R^2$ value greater than 0.99.

\subsubsection{Using only a time series}

    \begin{figure}[ht]
        \centering
        \begin{minipage}{0.6\textwidth}
            \begin{framed}
                \centering
                \includegraphics[width=\textwidth]{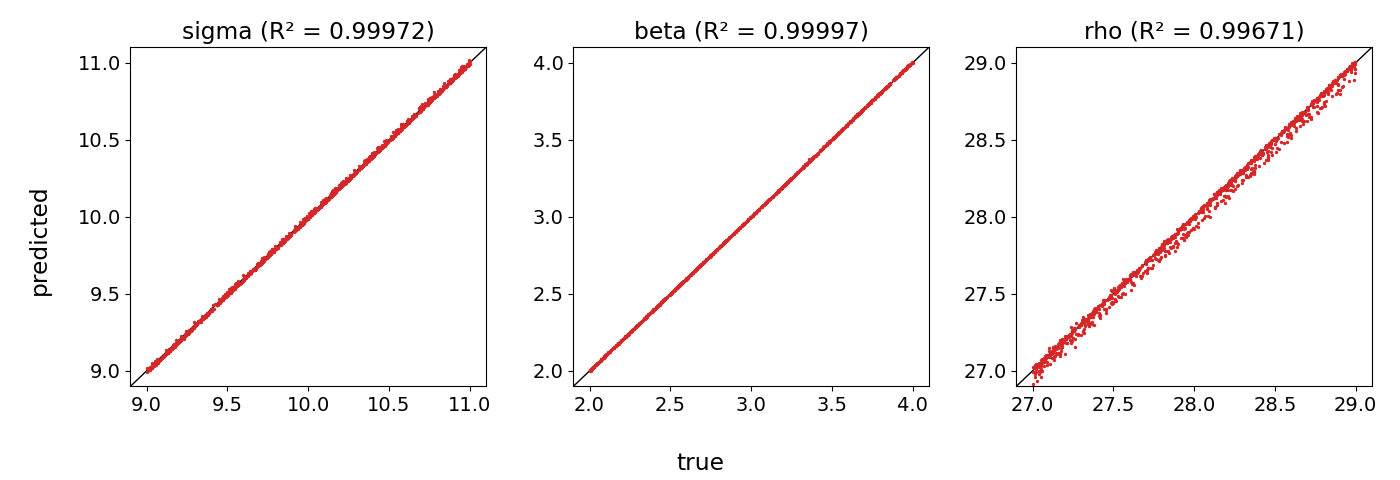}
                \caption*{Lorenz}
            \end{framed}
        \end{minipage}
        \begin{minipage}{0.3\textwidth}
            \begin{framed}
                \centering
                \includegraphics[width=0.74\textwidth]{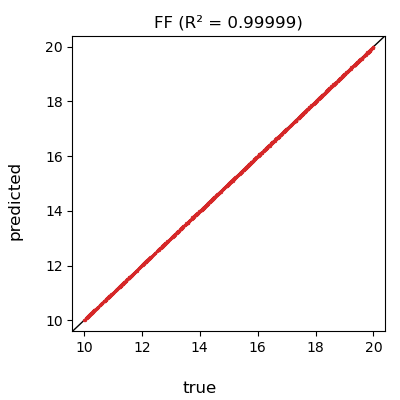}
                \caption*{Lorenz96}
            \end{framed}
        \end{minipage}\vspace{0.5em}
        \begin{minipage}{0.9\textwidth}
            \begin{framed}
                \centering
                \includegraphics[width=\textwidth]{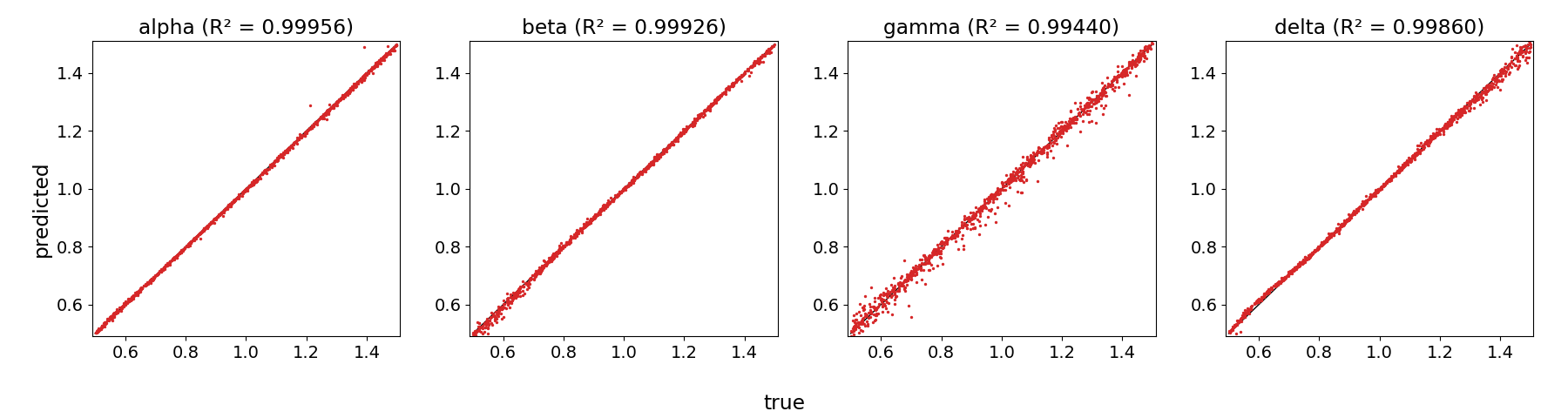}
                \caption*{Lotka-Volterra Predator-Prey}
            \end{framed}
        \end{minipage}
        \caption{The performance of the model's inferred parameters (predicted) vs. the true parameters, when receiving a time delay embedded reconstruction of the state space from a one-dimensional time series. The $R^2$ values of each parameter are also given. While the performance slightly suffers, as can be observed, the model still achieves an $R^2$ value greater than $0.99$ on all parameters for each system.}
        \label{fig:timedelay_r2_plot}
    \end{figure} 

    Now we examine the performance of our method when we just take a one-dimensional time-series from each system and time delay embed it. Methodologically, we first integrating the full system, then we take just the $x$-coordinate (i.e. the first coordinate) and time delay embed this into a higher dimension in order to reconstruct the state space, or more precisely to have a representation of the state space that is diffeomorphic to the original. In Section \ref{sec:extra-figures}, in Figure \ref{fig:timeseries_and_delayembed_lorenz}, we show an example of a trajectory from the Lorenz system, the time series from the $x$-coordinate, and its delay embedding in 3-dimensions. However, in our experiment, we choose a delay embedding dimension of 7-dimensions, in agreement with the guarantees of Takens' theorem.
    
    In Figure \ref{fig:timedelay_r2_plot}, we plot the model's inferred parameters v.s. the true parameters. In Table \ref{table:r2_table} (bottom) we record the $R^2$ value achieved by the model for each parameter for each system. We observe that the performance slightly suffers when using the time delay embedded reconstruction vs. the original, full state space. But overall the performance remains strong, and the statistics demonstrate that our method does pretty well in inferring the true parameters just from \emph{time series} data with parameter labels.

\subsection{The Compound Double Pendulum}
\label{subsec:doublependulum}

    Here we investigate the compound double pendulum, which has the following dynamics:
        \begin{equation*}\left\{\begin{array}{rl} 
            \dot{\theta}_1 &= \frac{6}{m l^2}\frac{2 p_{\theta_1} - 3\cos(\theta_1 - \theta_2) p_{\theta_2}}{16 - 9 \cos^2(\theta_1 - \theta_2)} \\
            \dot{\theta}_2 &= \frac{6}{m l^2}\frac{8 p_{\theta_2} - 3\cos(\theta_1 - \theta_2) p_{\theta_1}}{16 - 9 \cos^2(\theta_1 - \theta_2)} \\
            \dot{p}_{\theta_1} &= -\frac{1}{2} m l^2 \left( \dot{\theta}_1 \dot{\theta}_2 \sin(\theta_1 - \theta_2) + 3 \frac{g}{l} \sin\theta_1 \right) \\
            \dot{p}_{\theta_2} &= -\frac{1}{2} m l^2 \left(-\dot{\theta}_1 \dot{\theta}_2 \sin(\theta_1 - \theta_2) + \frac{g}{l} \sin \theta_2 \right) 
        \end{array}\right.\end{equation*}
    where the system parameters are the mass $m$ and the length $\ell$ -- both pendulums have the same mass and length. In our experiments, we have $m, \ell \in [1, 2]$. The training data consists of trajectories with both mass and length in $[1, 1.1, \ldots, 1.9, 2]$, i.e. the interval $[1, 2]$ with grid-spacing $0.1$. The mapping from system parameters to the dynamics, i.e. $\boldsymbol{\alpha} \mapsto F(\textbf{x}, \boldsymbol{\alpha})$, is \emph{nonlinear}, so that in general the squared error during the inference stage is \emph{nonconvex}. Indeed in Figure \ref{fig:loss-landscape-dubpend-true}, we plot the loss landscape showing its level sets are nonconvex. To ameliorate this, we initialize gradient descent with the system parameter in the training set that achieves the lowest mean squared error. Further hyperparameters and information on the training and test set can be found in Section \ref{subsec:hyperparameters_double_pendulum}.

\subsubsection{Using the original state space}
\label{sec:compound-double-pendulum-full}

    \begin{figure}[ht]
        \centering
        \begin{minipage}{\textwidth}
            \centering
            \includegraphics[width=0.65\textwidth]{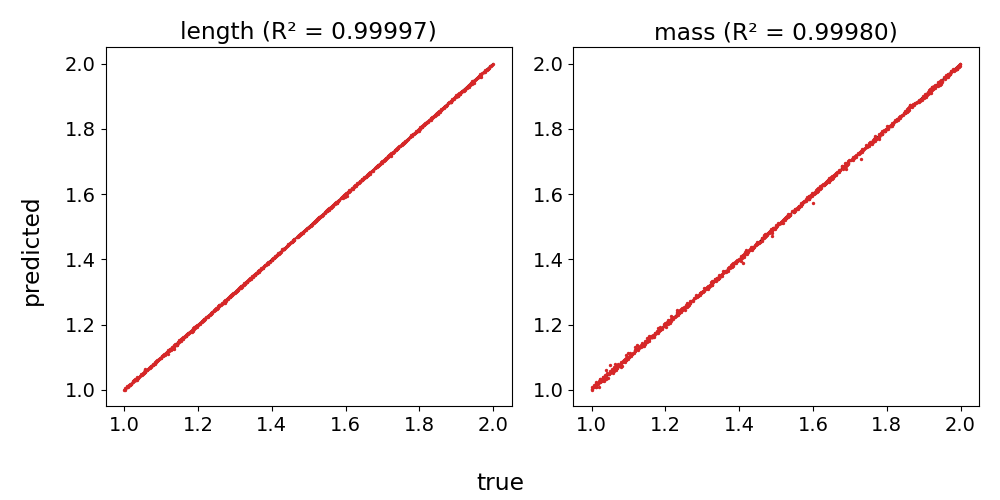}
        \end{minipage}
        \begin{minipage}{0.48\textwidth}
            \includegraphics[width=\textwidth]{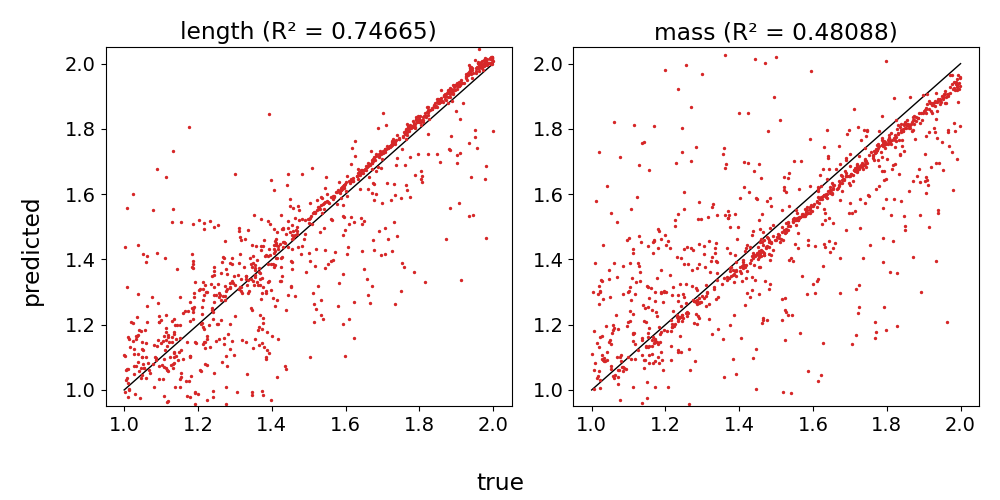}
        \end{minipage}
        \begin{minipage}{0.48\textwidth}
            \centering
            \includegraphics[width=\textwidth]{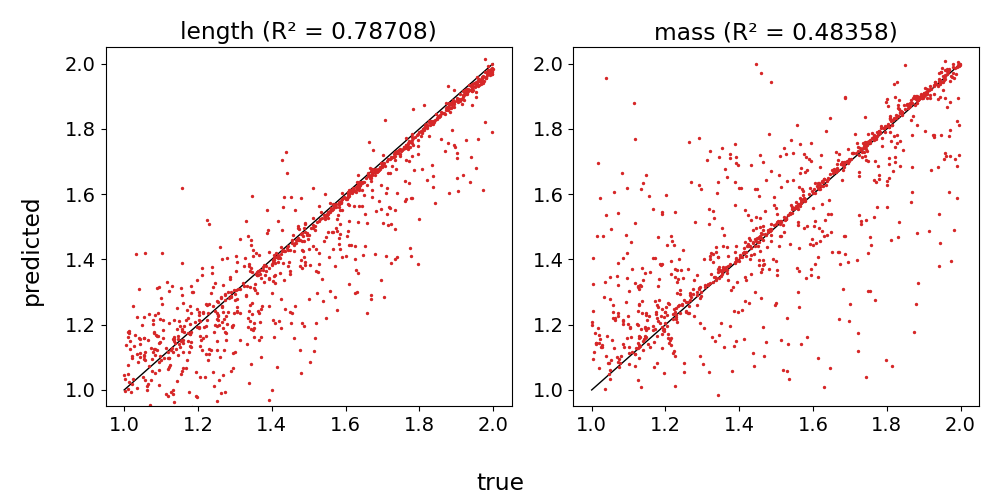}
        \end{minipage}
        \caption{$R^2$ values for determining the mass and length of the compound double pendulum when we have access to the full state. (Top) Here we not only use the full state, but we use spaced out time delays of the full state. As can be seen, we do well inferring the mass and length, with $R^2$ values above $0.999$. (Bottom) Here we do \emph{not} time delay the full state (left), or we perform \emph{consecutive} time delays (right). As can be observed, we do poorly, and surprisingly moreso than when we only have access to the angular time series.}
        \label{fig:double_pendulum_fullstate_r2_plot}
    \end{figure}

    We examine our method when it has access to the full state space. But we have found that just using the full state is not enough, and in order to achieve optimal performance we need to perform a \emph{spaced-out} time delay. The results can be seen in Figure \ref{fig:double_pendulum_fullstate_r2_plot} (Top) where we plot the $R^2$ values of inferring the mass and length of the double pendulum. As can be seen, we do well, with $R^2$ values greater than $0.999$ for both mass and length.

    We make the comparison to when we do \emph{not} use a spaced out time delay of the full state. When we use the naive method of only using the full state without time delays, our performance greatly suffers. The results are shown in Figure \ref{fig:double_pendulum_fullstate_r2_plot} (Bottom-left). If we were to borrow intuition from explicit integration schemes and perform a consecutive time delay (i.e. $\tau=1$), we have found this also performs poorly; the results are plotted in Figure \ref{fig:double_pendulum_fullstate_r2_plot} (Bottom-right). These results imply that using \emph{spaced out} time delays of the full state acts as a regularizer that improves performance. 
    Furthermore, this regularization is in the Learning Phase (Section \ref{subsec:learning_stage}) and not the Inference Stage (Section \ref{subsec:inference_stage}) as all a time delay (spaced or consecutive) does to the Inference loss function is essentially multiply the mean squared error by a factor (the dimension of the embedding). On the other hand, we suspect the effect of the regularization is to smooth the trajectories, as sharp corners are ``smoothed out" when you include other sufficiently far points from the trajectory. If we take the Lorenz system as an example, if you examine the $x$-coordinate time series in Figure \ref{fig:timeseries_and_delayembed_lorenz}, we see there are numerous sharp corners. But if you time delay this $x$-coordinate into 3 dimensions, then because there are more intervals of smoothness than sharp corners, then when we embed the $x$-coordinate times series, the sharp corner becomes ``averaged out".
	We suspect that spaced out time delays of the full state space may also improve performance for other tasks, such as prediction, but this is beyond the scope of this work.

\subsubsection{Using only a time series}
\label{subsubsec:dubpend-time-series}

    Now we examine the performance of our method when we only have access to a one-dimensional time series: the angle of the first/top pendulum. In Figure \ref{fig:double_pendulum_timedelay_r2_plot} we plot the $R^2$ values of mass and length of the double pendulum. As can be observed, we do well in inferring the length of the pendulum, with an $R^2$ value greater than $0.999$, but inferring the mass is a bit harder --  we achieve an $R^2$ value of about $0.939$. Although not as great as the performance for the length, we still do well. 

    \begin{figure}[ht]
        \centering
        \includegraphics[width=0.75\textwidth]{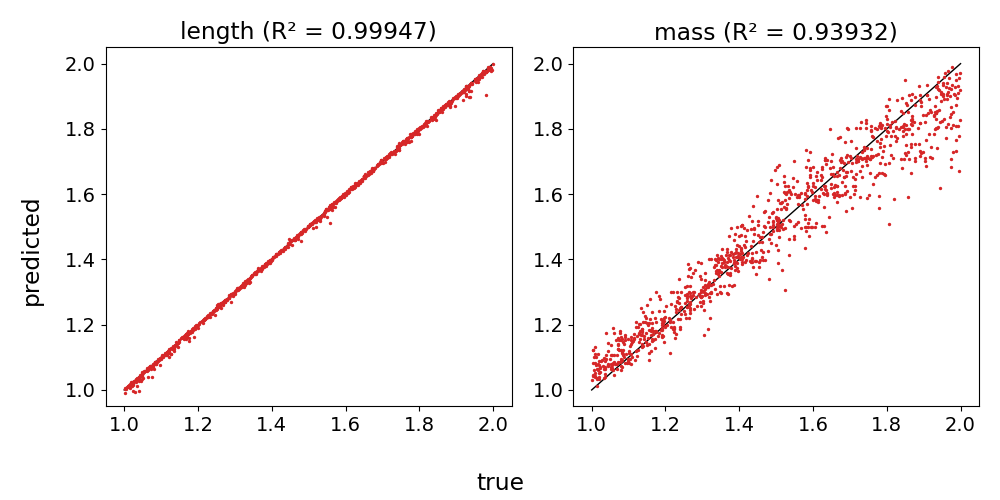}
        \caption{$R^2$ values for determining the mass and length of the compound double pendulum when we only have access to the first angle. We do really well in determining the length, and pretty good in determining the mass.}
        \label{fig:my_label}
        \label{fig:double_pendulum_timedelay_r2_plot}
    \end{figure}

\subsection{Hall-effect Thruster}
    
    In this section, we examine our method on real data: the Hall-effect Thruster (HET), named after the discoverer of the Hall effect, Edwin Hall, which is a type of ion thruster for spacecraft propulsion, where the propellant is accelerated by an electric field \cite{HET_article}. It is well-known, and our data shows, that this system exhibits chaotic behavior.

    \begin{figure}[ht]
        \begin{minipage}{0.49\textwidth}
            \centering
            \includegraphics[width=\textwidth]{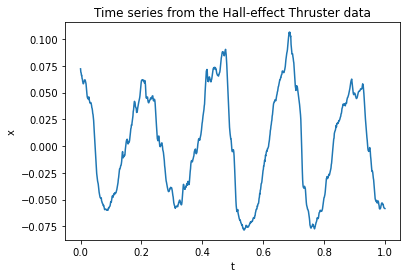}
        \end{minipage} %
        \begin{minipage}{0.49\textwidth}
            \centering
            \includegraphics[width=\textwidth]{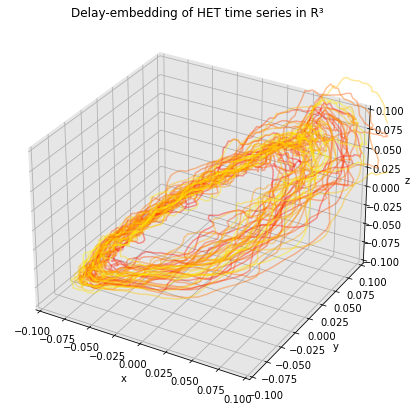}
        \end{minipage}
        \caption{Left: Time series plot for the HET trajectory with parameters $(\alpha, \beta) = (0, 1)$. Right: The delay embedding of the time series on the left into $\mathbb{R}^3$. The time delay is $t=0.01 = 10 \Delta t.$. We note that here we are now dealing with (real) noisy data.}
        \label{fig:timeseries_and_delayembed_het}
    \end{figure}
    
    In our case, the HET data comes as a 2-dimensional list of time series -- namely there are $11\times 31$ time series data, each with one million time-steps. This means we have two parameters which we call $\alpha$ and $\beta$, with $\alpha$ ranging uniformly from about $1$ to $3$ amperes discretized into $11$ values, and $\beta$ ranging uniformly from about $150$ to $500$ volts, discretized into $31$ values. But ultimately this depends on the calibration and specifics of the thruster, and to remain calibration agnostic, we set the specific values of $\alpha$ and $\beta$ to:
        \begin{equation*}
            -1 \le \alpha \le 1, \quad -3 \le \beta \le 3,
        \end{equation*}
    so this means the $11$ values of $\alpha$ will be $-1, -0.8, -0.6, \ldots, 0.8, 1$, and the $31$ values of $\beta$ will be $-3, -2.8, -2.6, \ldots, 2.8, 3$. We also set $\Delta t = 0.001$. In Figure \ref{fig:timeseries_and_delayembed_het} we show an example of a time series, and its delay embedding in $\mathbb{R}^3$. Experimentally, we delay embed the time series in 12-dimensions, with a time delay of $t=0.01$. 

    \begin{figure}[ht]
        \centering
        \includegraphics[width=0.9\textwidth]{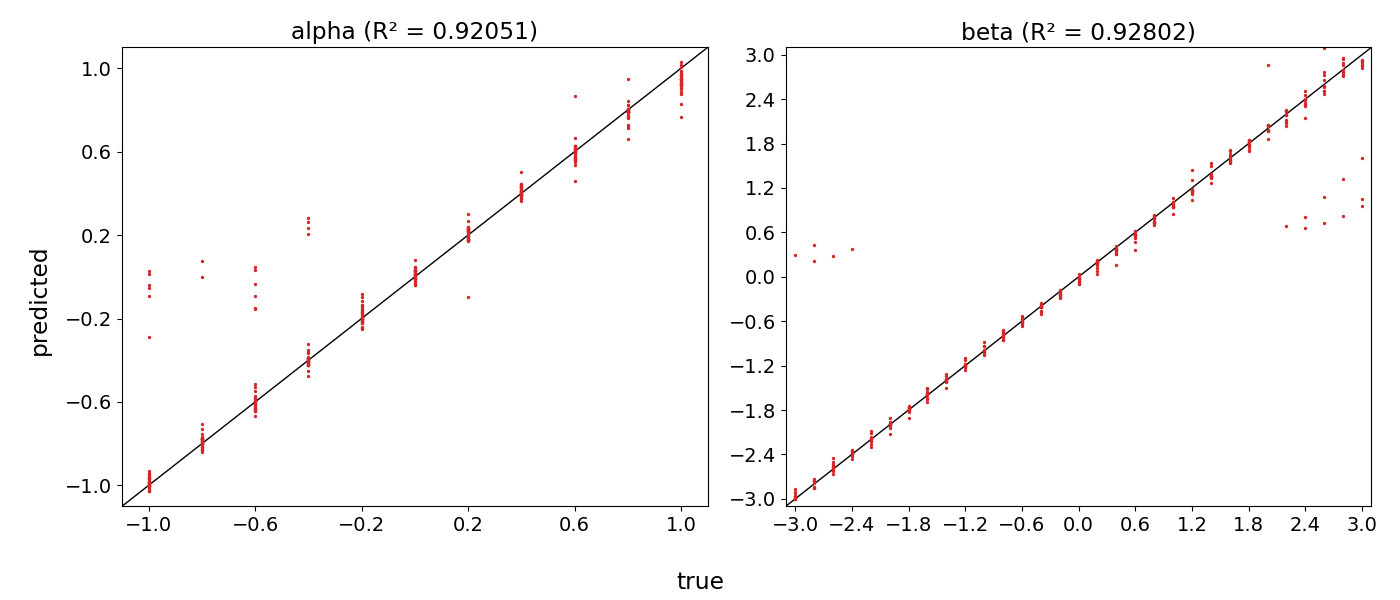}
        \caption{Inferred parameters are plotted against the true parameters, after delay embedding the HET time series. The red dots are the model's inferred parameters vs. the true values. For $\alpha$ we have $R^2 = 0.92051$, and for $\beta$ we have $R^2 = 0.92802$. While not as good as the Lorenz data, taking into account this is real, noisy data, our method seems to perform well. We also note there are outliers here, and of course removing them will improve the $R^2$ values.}
        \label{fig:het_delayembed_r2_plot}
    \end{figure}
    
    For the training data, we exclude the system parameters in a checkerboard fashion as seen in Section \ref{sec:extra-figures}, in Figure \ref{fig:excluded_parameters}. For the testing data we include all parameters. In this way, we are testing our method's ability to infer labels it has never seen before, i.e. to interpolate.

    Using our method on HET trajectories, which contains real-world noise, we compute the $R^2$ values for the inferred parameters v.s. the true parameters in Figure \ref{fig:het_delayembed_r2_plot}. We see that for $\alpha$ we have $R^2 = 0.92051$, and for $\beta$ we have $R^2 = 0.92802$. As expected, we don't do as well compared with the above systems, but considering this is real, noisy data, our method seems to perform well. Clearly, the $R^2$ values are heavily affected by outliers, and when we remove these, of course the $R^2$ value will improve. This demonstrates that not only are we able to do well in inferring the parameters of time series, but also to infer the parameters of time series not in the training data, i.e. it has the ability to interpolate.

\section{Conclusion}

In this work, we present a method to infer the system parameters of dynamical systems given either the full state, or merely a time series. For the scalar time series, we do this by delay embedding in order to reconstruct the state space, and then learn the dynamical system's velocity with a neural network. Furthermore, we can compute gradients of neural networks, allowing us to construct a gradient in parameter space, from which we can infer the system parameters. From our experiments on synthetic and real data, we have demonstrated the efficacy of our method, robustness to partial measurements, and shown it is a promising approach to inferring parameters of dynamical systems.

\section{Acknowledgements}

Alex Tong Lin and Stanley Osher were supported by Air Force Office of Scientific Research (AFOSR) Multidisciplinary University Research Initiative Grant FA9550-18-1-0502, ONR N00014-18-1-2527, N00014-18-20-1-2093, N00014-20-1-2787, Air Fare AWARD 16-EPA-RQ-09. Dan Eckhardt, Robert Martin, and Adrian Wong were supported by AFOSR LRIR FA9550-20RQCOR098 (PO: Fred Leve). We also thank Will Taitano for informative discussions.

\bibliography{references}
\bibliographystyle{plain}

\pagebreak
\appendix

\section{Proof of Theorems}
\label{sec:proof-of-thereoms}

\subsection{Proof of Theorem \ref{theorem:dt-convergence}}
\label{app:dt-convergence-proof}

\begin{proof}
    
    Due to the linearity of the optimization, we are solving a least squares problem. For notational convenience, let $L_{t}:\boldsymbol{\alpha} \mapsto F(\textbf{x}(t, \boldsymbol{\alpha}_0), \boldsymbol{\alpha})$ be the linear mapping of the dynamical system with respect to $\boldsymbol{\alpha}$, and let
        \begin{equation*}
            \textbf{y}_{\Delta t}(t) = \frac{\textbf{x}(t + \Delta t, \boldsymbol{\alpha}_0) - \textbf{x}(t, \boldsymbol{\alpha}_0)}{\Delta t}
        \end{equation*}
    By taking the derivative with respect to $\boldsymbol{\alpha}$ and setting to zero, we obtain,
        \begin{equation*}
            \boldsymbol{\alpha}_{\Delta t} = \mathbb{E}_{t\in\mathcal{I}}[L_{t}^\top L_{t}]^{-1} \mathbb{E}_{t\in\mathcal{I}}[L_{t}^\top \textbf{y}_{\Delta t}(t)]
        \end{equation*}
    The optimization problem fitting to the true velocity and yielding $\boldsymbol{\alpha}_0$ is
        \begin{equation*}
            \begin{split}
            \boldsymbol{\alpha}_0 &= \argmin_{\boldsymbol{\alpha}} \mathbb{E}_{t \in \mathcal{I}} \left[ \left\| F(\textbf{x}(t, \boldsymbol{\alpha}_0), \boldsymbol{\alpha}_0) - F(\textbf{x}(t, \boldsymbol{\alpha}_0), \boldsymbol{\alpha}) \right\|^2 \right] \\
            &= \mathbb{E}_{t\in\mathcal{I}}[L_{t}^\top L_{t}]^{-1} \mathbb{E}_{t\in\mathcal{I}}[L_{t}^\top F(\textbf{x}(t, \boldsymbol{\alpha}_0), \boldsymbol{\alpha}_0)]
            \end{split}
        \end{equation*}
    Now we compute,
        \begin{equation*}
            \begin{split}
            \left\| \boldsymbol{\alpha}_{\Delta t} - \boldsymbol{\alpha}_0 \right\| &= \left\| \mathbb{E}_{t\in\mathcal{I}}[L_{t}^\top L_{t}]^{-1} 
            \mathbb{E}_{t\in\mathcal{I}}[L_{t}^\top \left( \textbf{y}_{\Delta t}(t) - F(\textbf{x}(t, \boldsymbol{\alpha}_0), \boldsymbol{\alpha}_0) \right)] \right\| \\
            &\le B_1 B_2 \; \mathbb{E}_{t\in\mathcal{I}} \left[ \left\| \left( \textbf{y}_{\Delta t}(t) - F(\textbf{x}(t, \boldsymbol{\alpha}_0), \boldsymbol{\alpha}_0) \right) \right\| \right]
            \end{split}
        \end{equation*}
    where $B_1$ is the operator bound for $\mathbb{E}_{t\in\mathcal{I}}[L_{t}^\top L_{t}]^{-1}$ and $B_2$ the operator bound for $L_{t}^\top$. Then by the definition of the derivative, we have that,
        \begin{equation*}
            \lim_{\Delta t\rightarrow 0} \mathbb{E}_{t\in\mathcal{I}} \left[ \left\| \left( \textbf{y}_{\Delta t}(t) - F(\textbf{x}(t, \boldsymbol{\alpha}_0), \boldsymbol{\alpha}_0) \right) \right\| \right] = 0
        \end{equation*}
    and thus $\boldsymbol{\alpha}_{\Delta t} \rightarrow \boldsymbol{\alpha}$ as $\Delta t \rightarrow 0$.
    
\end{proof}

\subsection{Proof of Corollary \ref{corollary:dt-convergence-affine}}

\begin{proof}
    \label{corollary:dt-convergence-affine-proof}
    The optimization problem becomes,
        \begin{equation*}
            \begin{split}
            \boldsymbol{\alpha}_{\Delta t} &= \argmin_{\boldsymbol{\alpha}} \mathbb{E}_{t \in \mathcal{I}} \left[ \left\| \frac{\textbf{x}(t + \Delta t, \boldsymbol{\alpha}_0) - \textbf{x}(t, \boldsymbol{\alpha}_0)}{\Delta t} - F(\textbf{x}(t, \boldsymbol{\alpha}_0), \boldsymbol{\alpha}) \right\|^2 \right] \\
            &= \argmin_{\boldsymbol{\alpha}} \mathbb{E}_{t \in \mathcal{I}} \left[ \left\| \left(\frac{\textbf{x}(t + \Delta t, \boldsymbol{\alpha}_0) - \textbf{x}(t, \boldsymbol{\alpha}_0)}{\Delta t} - b_{\textbf{x}(t, \boldsymbol{\alpha_0})}\right) - L_{\textbf{x}(t, \boldsymbol{\alpha}_0)}\boldsymbol{\alpha} \right\|^2 \right]
            \end{split}
        \end{equation*}
    Setting,
        \begin{equation*}
            \textbf{y}_{\Delta t}(t) = \frac{\textbf{x}(t + \Delta t, \boldsymbol{\alpha}_0) - \textbf{x}(t, \boldsymbol{\alpha}_0)}{\Delta t} - b_{\textbf{x}(t, \boldsymbol{\alpha_0})}
        \end{equation*}
    then we can use the same approach as in the proof of Theorem \ref{theorem:dt-convergence}.
\end{proof}

\subsection{Proof of Theorem \ref{theorem:approx-error-alpha}}

\begin{proof}
\label{theorem:approx-error-alpha-proof}

    Without loss of generality, we assume $F$ is affine. Let
        \begin{equation*}
            \textbf{y}(t) = \frac{\textbf{x}(t + \Delta t, \boldsymbol{\alpha}_0) - \textbf{x}(t, \boldsymbol{\alpha}_0)}{\Delta t}
        \end{equation*}
    If $F$ is affine, we need only modify $\textbf{y}$ by a term that only depends on $\textbf{x}(t, \boldsymbol{\alpha}_0)$, and not $\boldsymbol{\alpha}$. Then we have,
        \begin{equation}\label{eq:y-F-e}
            \mathbb{E}_{t\in\mathcal{I}} \left[ \left\|\textbf{y}(t) - F_\theta(\textbf{x}(t, \boldsymbol{\alpha}_0), \boldsymbol{\alpha}) \right\|^2 \right] = \mathbb{E}_{t\in\mathcal{I}} \left[ \left\|\textbf{y}(t) - F(\textbf{x}(t, \boldsymbol{\alpha}_0), \boldsymbol{\alpha}) - e(\textbf{x}(t, \boldsymbol{\alpha_0}), \boldsymbol{\alpha}) \right\|^2 \right]
        \end{equation}
    Let,
        \begin{equation*}
            L_{t}:\boldsymbol{\alpha} \mapsto F(\textbf{x}(t, \boldsymbol{\alpha}_0), \boldsymbol{\alpha})
        \end{equation*}
    be the linear map. And let,
        \begin{equation*}
            J_t = J_{e(\textbf{x}(t, \boldsymbol{\alpha}_0), \cdot)}
        \end{equation*}
    be the Jacobian of $e$ with respect to $\boldsymbol{\alpha}$ at $e(\textbf{x}(t, \boldsymbol{\alpha}_0), \boldsymbol{\alpha})$. If we compute the derivative of eq. \eqref{eq:y-F-e} with respect to $\boldsymbol{\alpha}$, we get
        \begin{equation}\label{eq:derivative-alpha}
            \begin{split}
            &\mathbb{E}_{t\in\mathcal{I}}\left[L_{t}^\top \left( \textbf{y}(t) - L_{t}\boldsymbol{\alpha} - e(\textbf{x}(t, \boldsymbol{\alpha_0}), \boldsymbol{\alpha}) \right) \right] + \mathbb{E}_{t\in\mathcal{I}}\left[J_{t}( \boldsymbol{\alpha})^\top \left( \textbf{y}(t) - L_{t}\boldsymbol{\alpha} - e(\textbf{x}(t, \boldsymbol{\alpha_0}), \boldsymbol{\alpha}) \right) \right] \\ 
            &= \mathbb{E}_{t\in\mathcal{I}}\left[ L_{t}^\top \textbf{y}(t) \right] - \mathbb{E}_{t\in\mathcal{I}}\left[L_{t}^\top L_{t} \right] \boldsymbol{\alpha} + O(\epsilon)
            \end{split}
        \end{equation}
    where the last equality stems from the assumptions of eq. \ref{eq:sobolev} on $e$, and that $L_{t}$ is a bounded operator. Then we set the above derivative to zero, which is achieved at minimum points $\boldsymbol{\alpha}_{\Delta, \varepsilon}$, and multiply by $\mathbb{E}_{t\in\mathcal{I}}\left[L_{t}^\top L_{t} \right]^{-1}$ to get,
        \begin{equation*}
            \left\|\mathbb{E}_{t\in\mathcal{I}}\left[L_{t}^\top L_{t} \right]^{-1} \mathbb{E}_{t\in\mathcal{I}}\left[ L_{t}^\top \textbf{y}(t) \right] 
 - \boldsymbol{\alpha}_{\Delta t, \varepsilon}\right\| = O(\epsilon)
        \end{equation*}
    But note that
        \begin{equation*}
            \mathbb{E}_{t\in\mathcal{I}}\left[L_{t}^\top L_{t} \right]^{-1} \mathbb{E}_{t\in\mathcal{I}}\left[ L_{t}^\top \textbf{y}(t) \right] 
        \end{equation*}
    is just the solution to the original problem with the true velocity $F$, i.e. $\boldsymbol{\alpha}_{\Delta t}$, so,
        \begin{equation*}
            \left\|\boldsymbol{\alpha}_{\Delta_t} 
 - \boldsymbol{\alpha}_{\Delta t, \varepsilon}\right\| = O(\epsilon).
        \end{equation*}
    Recall that we set the derivative to zero, so $\boldsymbol{\alpha}_{\Delta t, \varepsilon}$ is a minimal point with the approximate velocity $F_\theta$.        

\end{proof}

\subsection{Consistency of the time delay solution}

\begin{lemma}\label{lemma:underlineFsmooth}
    Suppose we are given a dynamical system with full state space dynamics,
        \begin{equation*}
            \frac{d}{dt} \textbf{x}(t, \boldsymbol{\alpha}) = F(\textbf{x}(t, \boldsymbol{\alpha}), \boldsymbol{\alpha}), \quad t\in \mathcal{I} = [0, T]
        \end{equation*}
    but we only observe a time series $u(t)$ which can be delay embedded with embedding $\Phi$, as in eq. \eqref{eq:time-series_delay} and eq. \eqref{eq:delay-reconstruction-map}. Then if $F$ is smooth with respect to the space variable $\textbf{x}$, then the time delay velocity map
        \begin{equation*}
            \underline{F}(\underline{u}(t, \boldsymbol{\alpha}), \boldsymbol{\alpha}) = \frac{d}{dt} \underline{u}(t, \boldsymbol{\alpha})
        \end{equation*}
    is also smooth with respect to $\underline{u}$.
\end{lemma}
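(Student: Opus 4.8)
The plan is to derive a closed-form expression for $\underline{F}$ as a genuine function of the reconstructed state $\underline{u}$ by differentiating the delay reconstruction map, and then read off smoothness as a composition of smooth maps. First I would differentiate the defining identity $\underline{u}(t, \boldsymbol{\alpha}) = \Phi(\textbf{x}(t, \boldsymbol{\alpha}))$ from eq. \eqref{eq:delay-reconstruction-map} in $t$. By the chain rule together with the governing equation $\dot{\textbf{x}} = F(\textbf{x}, \boldsymbol{\alpha})$,
\[
    \frac{d}{dt}\underline{u}(t, \boldsymbol{\alpha}) = D\Phi(\textbf{x}(t, \boldsymbol{\alpha}))\,\frac{d}{dt}\textbf{x}(t, \boldsymbol{\alpha}) = D\Phi(\textbf{x}(t, \boldsymbol{\alpha}))\,F(\textbf{x}(t, \boldsymbol{\alpha}), \boldsymbol{\alpha}),
\]
where $D\Phi$ denotes the Jacobian of $\Phi$ with respect to $\textbf{x}$.

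Next, since $\Phi$ is assumed to be a smooth one-to-one coordinate transformation, it is a diffeomorphism onto its image, so its inverse $\Phi^{-1}$ exists and is itself smooth (by the inverse function theorem, using that $D\Phi$ is invertible on the image). This allows me to eliminate the explicit $t$-dependence by substituting $\textbf{x}(t, \boldsymbol{\alpha}) = \Phi^{-1}(\underline{u}(t, \boldsymbol{\alpha}))$, which gives
\[
    \underline{F}(\underline{u}, \boldsymbol{\alpha}) = D\Phi\!\left(\Phi^{-1}(\underline{u})\right) F\!\left(\Phi^{-1}(\underline{u}), \boldsymbol{\alpha}\right).
\]
This exhibits the velocity of $\underline{u}$ as a bona fide function of the reconstructed state $\underline{u}$ (and $\boldsymbol{\alpha}$) rather than merely a quantity defined along the trajectory, which is exactly the content of the lemma.

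Finally I would conclude smoothness in $\underline{u}$ directly from this formula, since composition and multiplication preserve smoothness: $\Phi^{-1}$ is smooth by the above; $D\Phi$ is smooth because $\Phi$ is smooth (its partial derivatives are again smooth), so $D\Phi(\Phi^{-1}(\cdot))$ is smooth in $\underline{u}$; and $F$ is smooth in its state argument by hypothesis, so $F(\Phi^{-1}(\cdot), \boldsymbol{\alpha})$ is smooth in $\underline{u}$. The product of these two smooth matrix- and vector-valued maps is therefore smooth in $\underline{u}$, as claimed.

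The main obstacle is twofold and both parts are handled by the substitution above. One must (i) guarantee that $\Phi^{-1}$ is smooth, which relies on interpreting the hypothesis ``smooth one-to-one coordinate transformation'' as a diffeomorphism (equivalently, invoking the inverse function theorem where $D\Phi$ is invertible), and (ii) ensure the chain-rule expression is well-defined as a function of $\underline{u}$ and not just along a single trajectory $\textbf{x}(t, \boldsymbol{\alpha}_0)$. The second point is precisely what the replacement $\textbf{x} = \Phi^{-1}(\underline{u})$ accomplishes, turning the trajectory-dependent $\textbf{x}(t, \boldsymbol{\alpha})$ into a smooth function of $\underline{u}$ alone; the remaining smoothness bookkeeping is routine.
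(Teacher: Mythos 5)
Your proof is correct and follows essentially the same route as the paper: both arguments reduce to writing $\underline{F}(\underline{u},\boldsymbol{\alpha}) = J_\Phi\bigl(\Phi^{-1}(\underline{u})\bigr)\,F\bigl(\Phi^{-1}(\underline{u}),\boldsymbol{\alpha}\bigr)$ and concluding smoothness as a composition of smooth maps, the paper phrasing this via the conjugated flow $\underline{V} = \Phi \circ V \circ \Phi^{-1}$ while you differentiate the reconstruction identity directly. Your explicit chain-rule formula is in fact exactly eq.~\eqref{eq:F-chain-rule}, which the paper only writes down later in the proof of Theorem~\ref{theorem:time-delay}, so your version is, if anything, slightly more direct.
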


\begin{proof}
    If we let $V$ be the flow generated by the vector field $F$, so that
        \begin{equation*}
            V(\textbf{x}_0, \boldsymbol{\alpha}, t) = \textbf{x}(t, \boldsymbol{\alpha}), \quad \textbf{x}(0, \boldsymbol{\alpha}) = \textbf{x}_0
        \end{equation*}
    then we have $\frac{d}{dt} V(\textbf{x}_0, \boldsymbol{\alpha}, t) = F(\textbf{x}(t, \boldsymbol{\alpha}), \boldsymbol{\alpha})$. Note that since the vector field $F$ is smooth, then the flow $V$ is also smooth. We can also observe, and also following \cite{CASDAGLI199152}, that if $\underline{V}$ is the flow for $\underline{F}$, we have the following relationship
        \begin{equation*}
            \underline{V}(\underline{u}_0, \boldsymbol{\alpha}, t) = \Phi \circ V(\cdot, \boldsymbol{\alpha}, t) \circ \Phi^{-1}(\underline{u}_0)
        \end{equation*}
    then,
        \begin{equation*}
            \begin{split}
            \underline{F}(\underline{u}_0, \boldsymbol{\alpha}, t) = \frac{d}{dt} \underline{V}(\underline{u}_0, \boldsymbol{\alpha}, t) &= \frac{d}{dt} \Phi \circ V(\cdot, \boldsymbol{\alpha}, t) \circ \Phi^{-1}(\underline{u}_0) \\ 
            \end{split}
        \end{equation*}
    and thus the flow field $\underline{F}$ is also smooth, being the composition of smooth functions (where $\circ$ is composition).
\end{proof}

\begin{theorem}\label{theorem:time-delay}
    Suppose we have a dynamical system with full state space dynamics,
        \begin{equation*}
            \frac{d}{dt} \textbf{x}(t, \boldsymbol{\alpha}) = F(\textbf{x}(t, \boldsymbol{\alpha}), \boldsymbol{\alpha}), \quad t\in \mathcal{I} = [0, T]
        \end{equation*}
    with $F$ smooth with respect to the space variable $\textbf{x}$, but we only observe a time series $u(t)$ which can be delay embedded with embedding $\Phi$, as in eq. \eqref{eq:time-series_delay} and eq. \eqref{eq:delay-reconstruction-map}. Then
        \begin{equation*}
            \boldsymbol{\alpha}^* = \argmin_{\boldsymbol{\alpha}} \mathbb{E}_{t \in \mathcal{I}} \left[ \left\| F(\textbf{x}(t, \boldsymbol{\alpha}_0), \boldsymbol{\alpha}_0) - F(\textbf{x}(t, \boldsymbol{\alpha}_0), \boldsymbol{\alpha}) \right\|^2 \right]
        \end{equation*}
    if and only if
        \begin{equation*}
            \boldsymbol{\alpha}^* = \argmin_{\boldsymbol{\alpha}} \mathbb{E}_{t \in \mathcal{I}} \left[ \left\| \underline{F}(\underline{u}(t, \boldsymbol{\alpha}_0), \boldsymbol{\alpha}_0) - \underline{F}(\underline{u}(t, \boldsymbol{\alpha}_0), \boldsymbol{\alpha}) \right\|^2 \right]
        \end{equation*}
\end{theorem}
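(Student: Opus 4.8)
The plan is to reduce the delay-coordinate objective to the full-state objective through the change-of-variables formula for vector fields under the diffeomorphism $\Phi$, and then to identify both $\argmin$ sets as the zero-sets of their respective (non-negative) integrands.

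First I would make the relation between $\underline{F}$ and $F$ explicit. Starting from the conjugation identity $\underline{V}(\underline{u}_0, \boldsymbol{\alpha}, t) = \Phi \circ V(\cdot, \boldsymbol{\alpha}, t) \circ \Phi^{-1}(\underline{u}_0)$ established in Lemma \ref{lemma:underlineFsmooth} and differentiating at $t = 0$ via the chain rule, one gets $\underline{F}(\underline{u}_0, \boldsymbol{\alpha}) = D\Phi(\Phi^{-1}(\underline{u}_0))\, F(\Phi^{-1}(\underline{u}_0), \boldsymbol{\alpha})$. Evaluating along the observed orbit, where $\underline{u}(t, \boldsymbol{\alpha}_0) = \Phi(\textbf{x}(t, \boldsymbol{\alpha}_0))$ and hence $\Phi^{-1}(\underline{u}(t, \boldsymbol{\alpha}_0)) = \textbf{x}(t, \boldsymbol{\alpha}_0)$, and subtracting the $\boldsymbol{\alpha}_0$ and $\boldsymbol{\alpha}$ versions, yields the factored identity
\[
\underline{F}(\underline{u}(t,\boldsymbol{\alpha}_0),\boldsymbol{\alpha}_0) - \underline{F}(\underline{u}(t,\boldsymbol{\alpha}_0),\boldsymbol{\alpha}) = D\Phi(\textbf{x}(t,\boldsymbol{\alpha}_0)) \bigl[ F(\textbf{x}(t,\boldsymbol{\alpha}_0),\boldsymbol{\alpha}_0) - F(\textbf{x}(t,\boldsymbol{\alpha}_0),\boldsymbol{\alpha}) \bigr],
\]
which is the crux of the argument.

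Next I would characterize the two minimizer sets. Each objective is an expectation over $t \in \mathcal{I}$ of a non-negative, $t$-continuous integrand, and each vanishes at $\boldsymbol{\alpha} = \boldsymbol{\alpha}_0$; hence the minimum value of each is $0$, and its $\argmin$ set is exactly the set of $\boldsymbol{\alpha}$ for which the integrand vanishes identically in $t$ (a.e.\ vanishing of a continuous integrand upgrading to everywhere-vanishing). Thus $\boldsymbol{\alpha}^*$ solves the full-state problem iff $F(\textbf{x}(t,\boldsymbol{\alpha}_0),\boldsymbol{\alpha}^*) = F(\textbf{x}(t,\boldsymbol{\alpha}_0),\boldsymbol{\alpha}_0)$ for all $t$, and the analogous statement with $\underline{F}$ characterizes the delay problem.

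Finally I would invoke that $\Phi$ is a diffeomorphism, so the Jacobian $D\Phi(\textbf{x}(t,\boldsymbol{\alpha}_0))$ is invertible for every $t$. Multiplying the factored identity by $D\Phi(\textbf{x}(t,\boldsymbol{\alpha}_0))^{-1}$ (or simply reading it off) shows that the full-state integrand vanishes at a given pair $(t,\boldsymbol{\alpha})$ exactly when the delay integrand does; the two zero-sets therefore coincide, giving equality of the $\argmin$ sets and the claimed ``if and only if''. I expect the main obstacle to be purely conceptual and to lie in the first step: one must read $\underline{F}(\cdot,\boldsymbol{\alpha})$ as the pushforward of $F(\cdot,\boldsymbol{\alpha})$ under the \emph{single fixed} embedding $\Phi$ (built from the $\boldsymbol{\alpha}_0$ data), which is precisely what the conjugation formula in Lemma \ref{lemma:underlineFsmooth} encodes, and to note that it is the \emph{invertibility} of $D\Phi$ --- not merely the smoothness of $\Phi$ --- that makes this reduction lossless.
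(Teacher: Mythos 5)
Your proposal is correct and follows essentially the same route as the paper's proof: both reduce the delay objective to the full-state objective via the pushforward identity $\underline{F}(\underline{u},\boldsymbol{\alpha}) = J_\Phi(\Phi^{-1}(\underline{u}))\,F(\Phi^{-1}(\underline{u}),\boldsymbol{\alpha})$ obtained from the conjugacy of the flows, characterize each $\argmin$ set as the zero-set of a non-negative integrand (since $\boldsymbol{\alpha}_0$ makes each objective vanish), and use invertibility of $J_\Phi$ along the $\boldsymbol{\alpha}_0$-orbit to identify the two zero-sets. If anything your version is tighter: the paper takes a detour through arguing that the trajectories $\underline{u}(t,\boldsymbol{\alpha}^*)$ and $\underline{u}(t,\boldsymbol{\alpha}_0)$ launched from the same initial condition coincide, a step your factored identity $\underline{F}(\underline{u}(t,\boldsymbol{\alpha}_0),\boldsymbol{\alpha}_0) - \underline{F}(\underline{u}(t,\boldsymbol{\alpha}_0),\boldsymbol{\alpha}) = J_\Phi(\textbf{x}(t,\boldsymbol{\alpha}_0))\bigl[F(\textbf{x}(t,\boldsymbol{\alpha}_0),\boldsymbol{\alpha}_0) - F(\textbf{x}(t,\boldsymbol{\alpha}_0),\boldsymbol{\alpha})\bigr]$ renders unnecessary, since everything is evaluated on the fixed $\boldsymbol{\alpha}_0$-orbit throughout.
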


\begin{proof}
    Suppose
        \begin{equation*}
            \boldsymbol{\alpha}^* = \argmin_{\boldsymbol{\alpha}} \mathbb{E}_{t \in \mathcal{I}} \left[ \left\| \underline{F}(\underline{u}(t, \boldsymbol{\alpha}_0), \boldsymbol{\alpha}_0) - \underline{F}(\underline{u}(t, \boldsymbol{\alpha}_0), \boldsymbol{\alpha}) \right\|^2 \right]
        \end{equation*}
    We note that because $\boldsymbol{\alpha}_0$ is a solution, then this means,
        \begin{equation}\label{eq:equals-zero}
            \mathbb{E}_{t \in \mathcal{I}} \left[ \left\| \underline{F}(\underline{u}(t, \boldsymbol{\alpha}_0), \boldsymbol{\alpha}_0) - \underline{F}(\underline{u}(t, \boldsymbol{\alpha}_0), \boldsymbol{\alpha}^*) \right\|^2 \right] = 0
        \end{equation}
    From Lemma \ref{lemma:underlineFsmooth}, we know $\underline{F}$ is smooth, and therefore from the above equation, we have must have
        \begin{equation}\label{eq:same_velocity}
            \underline{F}(\underline{u}(t, \boldsymbol{\alpha}_0), \boldsymbol{\alpha}_0) = \underline{F}(\underline{u}(t, \boldsymbol{\alpha}_0), \boldsymbol{\alpha}^*), \quad t\in \mathcal{I}
        \end{equation}
    But because $\underline{F}$ is smooth, then setting, 
        \begin{equation}\label{eq:underline_u_inits}
            \underline{u}(0, \boldsymbol{\alpha}^*) = \underline{u}(0, \boldsymbol{\alpha}_0)    
        \end{equation} 
    we get
        \begin{equation*}
            \underline{F}(\underline{u}(0, \boldsymbol{\alpha}_0), \boldsymbol{\alpha}_0) = \underline{F}(\underline{u}(0, \boldsymbol{\alpha}^*), \boldsymbol{\alpha}_0) = \underline{F}(\underline{u}(0, \boldsymbol{\alpha}^*), \boldsymbol{\alpha}^*)
        \end{equation*}
    where the first equality is from eq. \eqref{eq:underline_u_inits}, and the second equality is from eq. \eqref{eq:same_velocity}. But then due to the equality of velocity maps in eq. \eqref{eq:same_velocity}, and because the trajectories have the same point at time $t=0$,
        \begin{equation}\label{eq:u-equality-t}
            \underline{u}(t, \boldsymbol{\alpha}_0) = \underline{u}(t, \boldsymbol{\alpha}^*), \quad t \in \mathcal{I}.
        \end{equation}
    which also means,
        \begin{equation}\label{eq:F-equality-t}
            \underline{F}(\underline{u}(t, \boldsymbol{\alpha}_0), \boldsymbol{\alpha}_0) = \underline{F}(\underline{u}(t, \boldsymbol{\alpha}^*), \boldsymbol{\alpha}^*), \quad t\in\mathcal{I}.
        \end{equation}
    
    Now, from the proof of Lemma \ref{lemma:underlineFsmooth}, we know that the flow fields have the following relationship,
        \begin{equation*}
            \underline{V}(\underline{u}_0, \boldsymbol{\alpha}, t) = \Phi \circ V(\Phi^{-1}(\underline{u}_0), \boldsymbol{\alpha}, t) = \Phi \circ V(\textbf{x}_0, \boldsymbol{\alpha}, t)
        \end{equation*}
    And we note,
        \begin{equation*}
            \underline{F}(\underline{u}(t, \boldsymbol{\alpha}), \boldsymbol{\alpha}) = \frac{d}{dt} \underline{V}(\underline{u}_0, \boldsymbol{\alpha}, t)
        \end{equation*}
    Then using the chain rule, we have for all $\boldsymbol{\alpha}$,
        \begin{equation}\label{eq:F-chain-rule}
            \begin{split}
                \underline{F}(\underline{u}(t, \boldsymbol{\alpha}), \boldsymbol{\alpha}) &= J_\Phi(\Phi^{-1} (\underline{u}(t, \boldsymbol{\alpha}))) F(\Phi^{-1}(\underline{u}(t, \boldsymbol{\alpha})), \boldsymbol{\alpha}) \\
                &= J_\Phi(\textbf{x}(t, \boldsymbol{\alpha})) F(\textbf{x}(t, \boldsymbol{\alpha}), \boldsymbol{\alpha})
            \end{split}
        \end{equation}
    and therefore using eq. \eqref{eq:u-equality-t}, eq. \eqref{eq:F-equality-t}, and eq. \eqref{eq:F-chain-rule}, then eq. \eqref{eq:equals-zero} becomes,
        \begin{equation*}
            \begin{split}
            0 &= \mathbb{E}_{t \in \mathcal{I}} \left[ \left\| \underline{F}(\underline{u}(t, \boldsymbol{\alpha}_0), \boldsymbol{\alpha}_0) - \underline{F}(\underline{u}(t, \boldsymbol{\alpha}_0), \boldsymbol{\alpha}^*) \right\|^2 \right] \\
            &= \mathbb{E}_{t \in \mathcal{I}} \left[ \left\| J_\Phi(\textbf{x}(t, \boldsymbol{\alpha}_0)) F(\textbf{x}(t, \boldsymbol{\alpha}_0), \boldsymbol{\alpha}_0) - \underline{F}(\underline{u}(t, \boldsymbol{\alpha}^*), \boldsymbol{\alpha}^*) \right\|^2 \right] \\
            &= \mathbb{E}_{t \in \mathcal{I}} \left[ \left\| J_\Phi(\textbf{x}(t, \boldsymbol{\alpha}_0)) F(\textbf{x}(t, \boldsymbol{\alpha}_0), \boldsymbol{\alpha}_0) - J_\Phi(\textbf{x}(t, \boldsymbol{\alpha}^*)) F(\textbf{x}(t, \boldsymbol{\alpha}^*), \boldsymbol{\alpha}^*) \right\|^2 \right] \\
            &= \mathbb{E}_{t \in \mathcal{I}} \left[ \left\| J_\Phi(\textbf{x}(t, \boldsymbol{\alpha}_0)) F(\textbf{x}(t, \boldsymbol{\alpha}_0), \boldsymbol{\alpha}_0) - J_\Phi(\textbf{x}(t, \boldsymbol{\alpha}_0))F(\textbf{x}(t, \boldsymbol{\alpha}_0), \boldsymbol{\alpha}^*) \right\|^2 \right]
            \end{split}
        \end{equation*}
    where we used,
        \begin{equation*}
            \textbf{x}(t, \boldsymbol{\alpha}^*) = \Phi^{-1}(\underline{u}(t, \boldsymbol{\alpha}^*)) = \Phi^{-1}(\underline{u}(t, \boldsymbol{\alpha}_0)) = \textbf{x}(t, \boldsymbol{\alpha}_0)
        \end{equation*}
    By smoothness, this means,
        \begin{equation*}
            \begin{split}
            0 &= \left\| J_\Phi(\textbf{x}(t, \boldsymbol{\alpha}_0)) F(\textbf{x}(t, \boldsymbol{\alpha}_0), \boldsymbol{\alpha}_0) - J_\Phi(\textbf{x}(t, \boldsymbol{\alpha}_0))F(\textbf{x}(t, \boldsymbol{\alpha}_0), \boldsymbol{\alpha}^*) \right\|^2 \\
            &= \left\| J_\Phi(\textbf{x}(t, \boldsymbol{\alpha}_0)) \left(F(\textbf{x}(t, \boldsymbol{\alpha}_0), \boldsymbol{\alpha}_0) - F(\textbf{x}(t, \boldsymbol{\alpha}_0), \boldsymbol{\alpha}^*) \right) \right\|^2
            \end{split}
        \end{equation*}
    But note that since $\Phi$ is invertible, then $J_\Phi \neq 0$. Then we must have 
        \begin{equation*}
            F(\textbf{x}(t, \boldsymbol{\alpha}_0), \boldsymbol{\alpha}_0) = F(\textbf{x}(t, \boldsymbol{\alpha}_0), \boldsymbol{\alpha}^*)
        \end{equation*}
    so then,
        \begin{equation*}
            0 = \mathbb{E}_{t \in \mathcal{I}} \left[ \left\| F(\textbf{x}(t, \boldsymbol{\alpha}_0), \boldsymbol{\alpha}_0) - F(\textbf{x}(t, \boldsymbol{\alpha}_0), \boldsymbol{\alpha}^*) \right\|^2 \right]
        \end{equation*}
    and thus,
        \begin{equation*}
            \boldsymbol{\alpha}^* = \argmin_{\boldsymbol{\alpha}} \mathbb{E}_{t \in \mathcal{I}} \left[ \left\| F(\textbf{x}(t, \boldsymbol{\alpha}_0), \boldsymbol{\alpha}_0) - F(\textbf{x}(t, \boldsymbol{\alpha}_0), \boldsymbol{\alpha}) \right\|^2 \right]
        \end{equation*}
    A similar argument proves the other direction.

\end{proof}

\section{Linear and Affine With Respect to System Parameters}
\label{sec:linear-and-affine}

\subsection{Linearity of Lotka-Volterra Predator Prey}

The Lotka-Volterra Predator Prey system is,
    \begin{equation*}
        \left.\begin{array}{rl} 
        \dot{x} &= \alpha x - \beta xy \\ 
        \dot{y} &= \delta xy - \gamma y 
        \end{array}\right\} = 
        \left[\begin{matrix}
            x & -xy &  0 &  0 \\
            0 &   0 & xy & -y
        \end{matrix}\right] 
        \left[\begin{matrix}
            \alpha \\
            \beta \\
            \delta \\
            \gamma
        \end{matrix}\right]
    \end{equation*}

\subsection{Affineness of Lorenz}

The Lorenz System is,
    \begin{equation*}
        \left.\begin{array}{rl} 
            \dot{x} &= \sigma (y - x) \\ 
            \dot{y} &= x (\rho - z) - y \\ 
            \dot{z} &= xy - \beta z 
        \end{array}\right\} = 
        \left[\begin{matrix}
            y - x &  0 & 0 \\
            0       &  x & 0 \\
            0       &  0 & -z
        \end{matrix}\right] 
        \left[\begin{matrix}
            \sigma \\
            \rho \\
            \beta
        \end{matrix}\right] +
        \left[\begin{matrix}
            0 \\
            -xz - y \\
            xy
        \end{matrix}\right]
    \end{equation*}

\section{Training Dataset Info}

\label{sec:training-dataset-info}

For all training datasets except the compound double pendulum, we integrated each system over uniformly-spaced time points of $dt=0.01$, from $t=0$ to $t=1,000$ (so $10^5 + 1$ time points). For the compound double pendulum, we had $dt=0.0001$ from $t=0$ to $t=100$ (so $10^6 + 1$ time points). For integration, we used the Python package ``SciPy", namely the function \texttt{scipy.integrate.solve\_ivp} method, with the integration method set to ``Radau" (the Radau integration method).   

\subsection{Lorenz}

For the Lorenz system, we used system parameters in the range:
    \begin{equation*}
        9 \le \sigma \le 11, \quad 2 \le \beta \le 4, \quad 27 \le \rho \le 29,
    \end{equation*}
    with a spacing of $0.2$ for each parameter. So some example parameters that were in the training dataset are: $(9.0, 2.2, 28.4)$, or $(10.2, 1.8, 27.8)$, but NOT $(9.1, 2.2, 28.4)$ because of the $9.1$.

    The initial point for all trajectories is $x_0 = (0, 1, 1.05)$.

    When we time delay the Lorenz system, we used a time delay of $t=1.6$ ($=dt \times 16 = 0.01 \times 16$), and embedding dimension 7.

\subsection{Lorenz96}

For the Lorenz96 system, we used system parameters in the range:
    \begin{equation*}
        10 \le F \le 20,
    \end{equation*}
    with a spacing of $0.2$. So some example parameters that were in teh training dataset are: $F=10.2$, or $F=15.2$, but NOT $F=16.1$. We used a 4 dimensional  Lorenz96 system.

    The initial point for all trajectories is $x_0 = (-2.46820633, 0.09570264, 1.59270902, 10.21372147)$

    When we time delay the Lorenz96 system, we used a time delay of $0.2$ with an embedding dimension of 9.

\subsection{Lotka-Volterra Predator-Prey (LVPP)}

For the Lotka-Volterra Predator-Prey (LVPP), we used system parameters in the range:
    \begin{equation*}
        \alpha, \beta, \gamma, \delta \in [0.5, 1.5]
    \end{equation*}
    with spacing of $0.2$. So some example parameters that were in the training dataset are: $(1.2, 0.8, 0.6, 0.6)$, or $(1.4, 1.8, 0.8, 0.2)$, but NOT $(1.1, 1.8, 0.8, 0.2)$ because of the $1.1$.

    The initial point for all trajectories is $x_0 = (3, 3)$.

    When we time delay the Lorenz system, we used a time delay of $t=0.1$ ($=dt \times 16 = 0.01 \times 16$), and embedding dimension 5.

\subsection{Compound Double Pendulum}

For the Compound Double Pendulum, we used system parameters in the range:
    \begin{equation*}
        m, \ell \in [1, 2]
    \end{equation*}
    with spacing of $0.1$. So during training both $m$ and $\ell$ take on values in the set $\{1.0, 1.1, \ldots, 1.9, 2.0\}$.

    The initial point for all trajectories was $x_0 = (-44.334542, 223.53554, -1.2249799, 2.535486)$.

    When we time delay the full state to achieve the optimal performance in Figure \ref{fig:double_pendulum_fullstate_r2_plot} (Top), we use a time delay of $t=0.1$ ($=dt \times 1,000 = 0.0001 \times 1,000$), with an embedding dimension of $12 = 4 \times 3$. So we used two time delays plus the original state. Namely if $\textbf{x}(t_k)$ is the original state, then we used $(\textbf{x}(t_k), \textbf{x}(t_{k - m k}), \textbf{x}(t_{k - 2m k}))$, with $m = 1,000$.

    When we time delay the double pendulum when we only had the time series of the angle as analyzed in Section \ref{subsubsec:dubpend-time-series}, we used a time delay of $t=0.1$ ($=dt \times 1,000 = 0.0001 \times 1,000$). 

\section{Hyperparameters}
\label{sec:hyperparameters}

In all our experiments, we use a feed-forward neural network with an input layer, 3 hidden layers, and an output layer. The 3 hidden layers have 2,000 nodes each. And the activation function is the rectified linear unit (ReLU).

\subsection{Chaotic System Affine in System Parameters}
\label{subsec:hyperparameters_lorenz}

For the learning phase, the training hyperparameters for our neural network are:

\begin{itemize}
    \item Learning rate: $10^{-4}$,
    \item Batch size: $500$
    \item Epochs: $2\times 10^6$
\end{itemize}

For the inference stage, we use stochastic gradient descent with momentum, with the following hyperparameters:

\begin{itemize}
    \item Learning rate: $10^{-4}$,
    \item Batch size: $500$,
    \item Momentum: $0.99$
    \item Maximum iterations: $20,000$.
\end{itemize}

For Lorenz and LVPP, the initial starting point of the gradient descent is the middle of the parameter ranges. For Lorenz96, we chose the initial starting point to be the parameter in the training set that minimized the $\ell^2$ error.

\subsection{Compound Double Pendulum}
\label{subsec:hyperparameters_double_pendulum}

For the learning phase, the training hyperparameters for our neural network are:

\begin{itemize}
    \item Learning rate: $10^{-3}$
    \item Batch size: $500$
    \item Epochs: $2\times 10^6$
\end{itemize}

For the inference stage, when we have access to the full state, we use the Adam optimizer, with the following hyperparameters,

\begin{itemize}
    \item Learning rate: $10^{-4}$
    \item Batch size: $1000$
    \item Beta values (for Adam): $\beta_1 = 0.9$, $\beta_2 = 0.999$
    \item Maximum iterations: $10,000$
\end{itemize}

When we only have access to the first angle's time series, then the learning rate is $10^{-5}$.

\subsection{Hall-effect Thruster}
\label{subsec:hyperparameters_het}

For the learning phase, the training hyperparameters for our neural network are:

\begin{itemize}
    \item Learning rate: $10^{-5}$,
    \item Batch size: $200$,
    \item Max epochs: $2\times 10^{6}$.
\end{itemize}

For the inference stage, we use stochastic gradient descent with momentum, with the following hyperparameters:

\begin{itemize}
    \item Learning rate: $10^{-1}$,
    \item Batch size: $200$,
    \item Momentum: $0.5$,
    \item Maximum iterations; $20,000$.
\end{itemize}

\section{Extra Figures}
\label{sec:extra-figures}

\begin{figure}[ht]
    \begin{minipage}{\textwidth}
        \centering
        \includegraphics[width=0.5\textwidth]{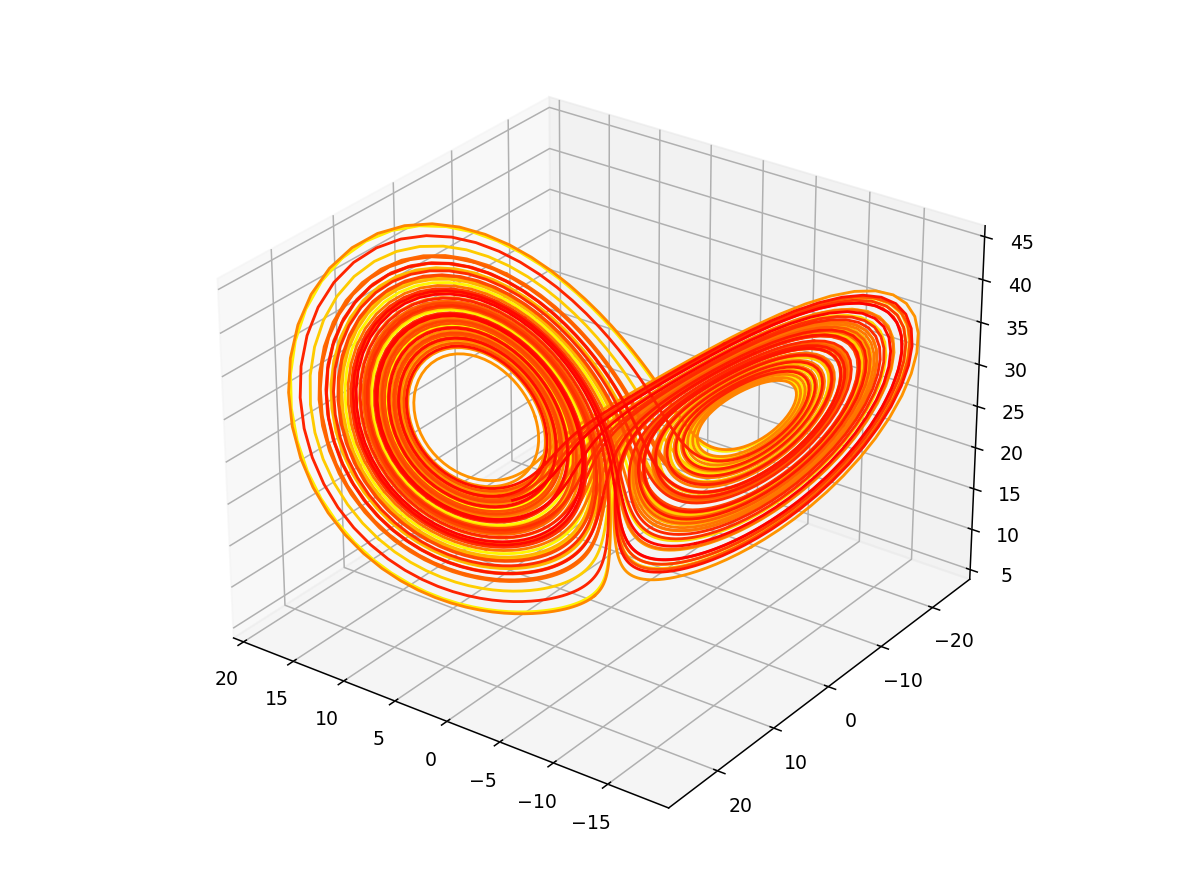}
    \end{minipage}
    \begin{minipage}{0.49\textwidth}
        \centering
        \includegraphics[width=\textwidth]{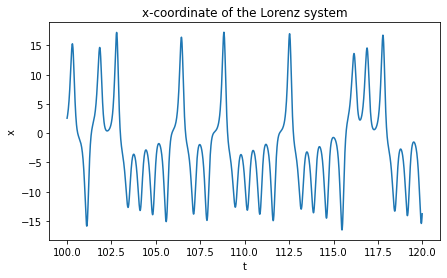}
    \end{minipage} %
    \begin{minipage}{0.49\textwidth}
        \centering
        \includegraphics[width=\textwidth]{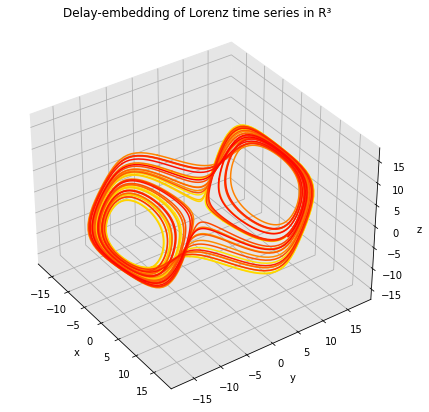}
    \end{minipage}
    \caption{(Top) Canonical example trajectory of the Lorenz system, with $\sigma=10$, $\beta=8/3$, and $\rho=28$. (Left) Time series plot for the trajectory with parameters $(\sigma, \beta, \rho) = (10, 8/3, 28)$. (Right) The delay embedding of the time series on the left into $\mathbb{R}^3$. The time delay is $t=0.16 = 16 \Delta t.$.}
    \label{fig:timeseries_and_delayembed_lorenz}
\end{figure}

\begin{figure}[ht]
    \centering
    \includegraphics[width=0.65\textwidth]{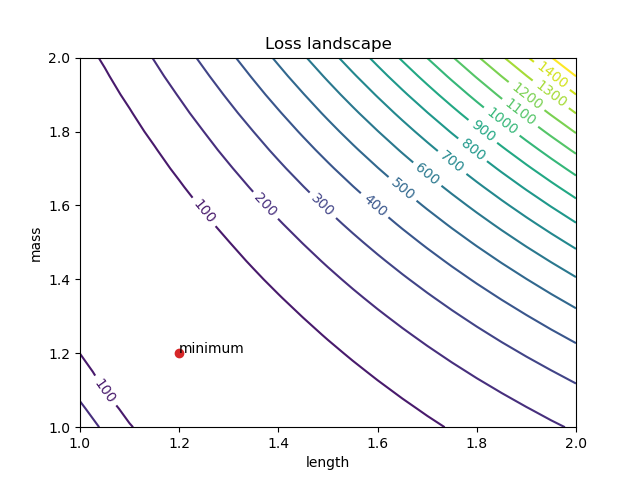}
    \caption{We plot the loss landscape, as a contour plot, of the optimization problem $\min_{\boldsymbol{\alpha}} \mathbb{E}_{t\in\mathcal{I}}\left[F(\textbf{x}(t, \boldsymbol{\alpha}_0), \boldsymbol{\alpha})  - F(\textbf{x}(t, \boldsymbol{\alpha}_0), \boldsymbol{\alpha}_0)\right]$, where $F$ is the right-hand side of the compound double pendulum ODE, and the length is $1.2$ and the mass is $1.2$, i.e. $\boldsymbol{\alpha}_0 = (1.2, 1.2)$. As can be seen, the level sets are nonconvex, thus the optimization problem is also nonconvex.}
    \label{fig:loss-landscape-dubpend-true}
\end{figure}

\begin{figure}[ht]
    \centering
    \includegraphics[width=0.9\textwidth]{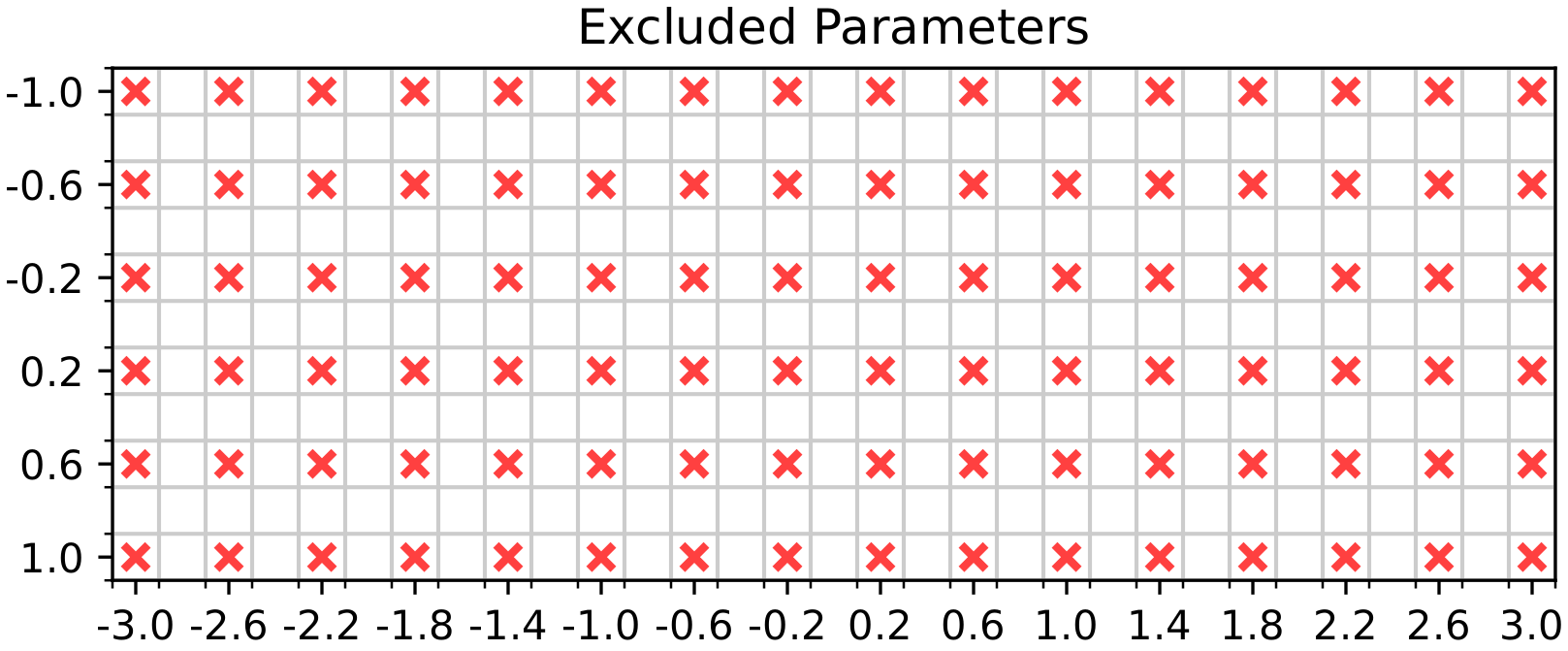}
    \caption{Visualization of the parameters that were excluded from the training data for the HET experiment. The parameters with red x markers are excluded from the training data, e.g. $(\alpha, \beta) = (-1, -3)$, or $(\alpha, \beta)=(0.2, 0.6)$.}
    \label{fig:excluded_parameters}
\end{figure}

\end{document}